\documentclass[11pt]{article}
\usepackage[preprint]{acl}
\usepackage{times,latexsym}
\usepackage[T1]{fontenc}
\usepackage[utf8]{inputenc}
\usepackage{microtype,graphicx,amsmath,amssymb,mathtools,booktabs,xcolor,enumitem,array,float}
\usepackage{algorithm}
\usepackage[noend]{algpseudocode}
\usepackage{amsthm}
\newtheorem{theorem}{Theorem}
\newtheorem{proposition}{Proposition}
\newcommand{\guided}{\textsc{Guided-AP}}
\newcommand{\probecal}{\textsc{ProbeCal}}
\newcommand{\atp}{AtP}
\newcommand{\atpstar}{AtP$^{*}$}

\newcommand{\E}{\mathbb{E}}
\newcommand{\Var}{\operatorname{Var}}

\title{Correcting Attribution Patching with a Few Exact Measurements}
\author{Luyang Zhang$^{1}$\thanks{Equal contribution.} \quad Minxing Zheng$^{1}$\footnotemark[1] \quad Jialu Wang$^{2}$ \\
  $^{1}$Carnegie Mellon University \quad $^{2}$University of California, Santa Cruz}
\begin{document}
\maketitle
\begin{abstract}
Mechanistic interpretability asks which language model components matter for a behavior, and activation patching answers by changing one component's activation and measuring its effect, at a cost growing with components and prompts. Attribution patching estimates these effects from one gradient pass, but its estimates can be far off, and verifying each one restores the cost. Can the structure of these errors reduce this cost? We propose \guided{}, which uses a few exact measurements to reveal attribution patching's errors and correct unmeasured estimates. On a single prompt, many components' errors share one sign, so a small exact sample yields one correction for the rest. Over many prompts of one task, components are misestimated similarly, so a few measured prompts show what to verify and how to correct each component's average effect. Our analysis shows that choosing what to verify needs error size, while correction also needs sign. On a single prompt, the correction lowers error by up to 26\% across three behaviors at the compute of verifying the largest estimates and also on larger models. Over many prompts, learned corrections cut the error of average effects on factual completion by 75\% to 79\%, repaying their cost in long analyses.
\end{abstract}

\section{Introduction}
Mechanistic interpretability seeks to explain how a language model's internal computations produce its behavior. To study behaviors such as resolving a reference or recalling a fact, researchers measure how internal components, such as attention heads or MLP neurons, influence the model's prediction. One widely used approach for measuring this influence is activation patching (AP) \citep{wang2023ioi,zhang2024patching}. It replaces a component's activation with one from a contrasting input and measures the resulting change in a prediction score, such as the logit difference between two answers.

\begin{figure*}[t]
\centering
\includegraphics[width=\textwidth]{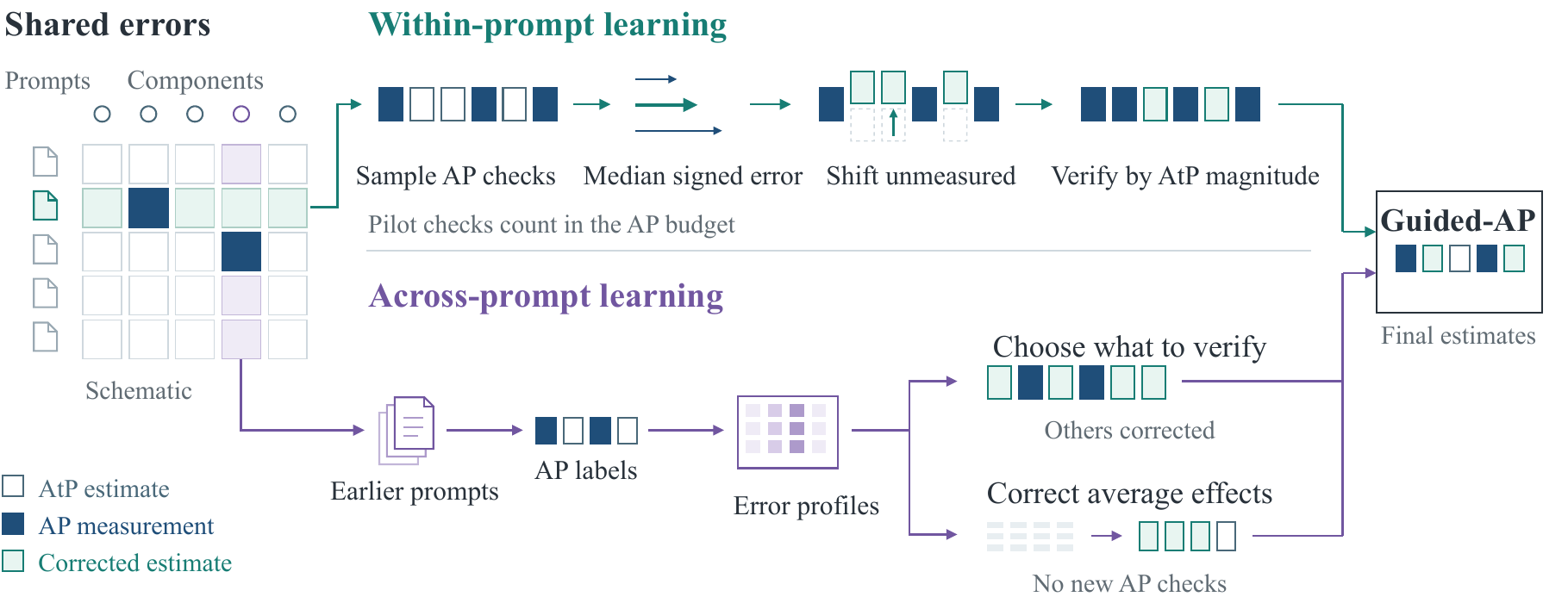}
\caption{\guided{} learns shared attribution errors. Within a prompt, a small exact sample corrects unmeasured estimates; across prompts, learned error profiles choose what to verify and correct remaining estimates and average effects. Final estimates are exact where verified and corrected elsewhere.}
\label{fig:shared-errors}
\end{figure*}

However, AP is expensive across many components and prompts, because each measurement is a separate intervention. To reduce this cost, attribution patching (\atp{}) approximates the same effects using gradients from a single backward pass \citep{nanda2023atp,syed2024eap}. This efficiency comes at the cost of accuracy, because its estimates can be far from the measured effects \citep{zhang2026attribution}. Verifying every estimate with AP would remove the error but restore the original cost, so the challenge is to improve the estimates without exhaustive measurement. Under a limited budget, verifying the components with the largest estimated effects is a strong baseline, but a large effect does not mean a large error. Errors can concentrate in a few components and recur across prompts, as illustrated in Figure~\ref{fig:motivation}. Errors can also persist when estimates are averaged across prompts. The central question of this paper is whether knowing the structure of these errors makes attribution analysis more effective and more efficient.

Our answer builds on the fact that an exact measurement reveals both a component's true effect and the error of its cheap estimate. When such errors are similar across estimates, a few measurements can repair estimates that were never measured. Each measurement therefore improves more than one estimate. It replaces its own estimate with the true effect, and the error it reveals improves the estimates left unmeasured. Each error belongs to one component on one prompt, so we can look at the errors from two perspectives (Figure~\ref{fig:shared-errors}). Our method, \guided{}, applies the same estimator in both settings. On a single prompt, the errors of many components share one sign, so one shift improves many estimates at once. We measure a small, representative sample and shift every unmeasured estimate by the median of the sampled
errors, the shift with the lowest mean absolute error on the sample. The rest of the budget verifies the largest estimated effects. Over many prompts, each component's error tends to recur. A few measured prompts give each component an error profile. Later prompts reuse it to choose verifications and correct each component's average effect. \guided{} learns from exact AP measurements.

On a single prompt, the correction needs summary statistics of the error distribution (e.g., its median), and a small exact sample on that prompt estimates them. Centering estimates without measuring their errors does not reproduce this gain. With the exact sample, the correction lowers error by up to 26\% at the same compute. Over many prompts, the errors left after averaging mostly share one sign, so a few calibration prompts can estimate each component's average error. A correction learned from exact measurements cuts the error of these averages on factual completion by 75\% to 79\% relative to uncorrected averages, with no measurement on the analyzed prompts. Using the same measurements directly as average effects is 6 to 15 times worse, so the gain comes from modeling the errors. Choosing what to verify needs only the size of each error. In our cost projections, the correction pays for itself within a few thousand prompts, and learned selection needs longer analyses, up to tens of thousands of prompts. The single-prompt gains carry to larger models and finer components, up to a 32B model with tens of thousands of components per layer, and on some tasks the correction reaches a fixed accuracy target several times faster than verification alone.

We make three contributions.
\begin{itemize}[leftmargin=*,itemsep=2pt,topsep=3pt]
\item \textbf{A framework of budgeted attribution analysis.} We study attribution analysis under a fixed budget of exact measurements that must serve both verification and correction, and we compare methods by the error left at equal compute.

\item \textbf{Analysis of error structures.} We show that attribution errors are shared within a single prompt and recur over many prompts, and which error statistic selection and correction each need.

\item \textbf{Scalability to finer components and larger models.} We show that a fixed budget of exact measurements remains useful as models grow larger and their components finer, where each measurement must go further.

\end{itemize}

\section{Related Work}

\paragraph{Causal attribution and circuit discovery.}
Activation patching localizes behavior through interventions whose results depend on corruption, component definitions, and output metrics \citep{wang2023ioi,zhang2024patching}. Automated circuit discovery reduces search costs \citep{conmy2023acdc,syed2024eap}; benchmarks and a shared task evaluate localization across tasks \citep{mueller2025mib,arad2025blackboxnlp,hanna2024faith}, and feature-level methods reach tens of thousands of components \citep{marks2025sparse}. We estimate isolated component effects and their dataset means; low effect-vector error does not imply faithfulness of a jointly intervened circuit.

\paragraph{Correcting attribution estimates.}
Attribution patching linearizes the intervention \citep{nanda2023atp}. Existing methods correct estimates within a prompt. \atpstar{} targets specific failure modes \citep{kramar2024atpstar}, relevance patching replaces gradients with propagation coefficients \citep{jafari2025relp}, interaction-aware backpropagation fixes gradient flow through softmax and normalization \citep{edin2025gim}, path-based attribution integrates along the intervention \citep{sundararajan2017ig,janizek2021integratedhessians}, and second-order patching correction expands effects at the endpoint \citep{zhang2026attribution}. We learn error structure from exact measurements and reuse it across components within a prompt or across prompts for a component. Our base estimate is \atp{}.

\paragraph{Efficient error estimation.}
Randomized diagonal estimators approximate the diagonal of a matrix from matrix-vector products \citep{hutchinson1989trace,bekas2007diagonal,baston2022diagpp,epperly2024xtrace}, and Hessian-vector products make this cheap for neural networks \citep{pearlmutter1994hvp,yao2021adahessian}; we use such sketches only as a comparison label source. Allocating exact measurements to unreliable estimates follows active learning \citep{cohn1994active,lewis1994sequential} and selective prediction \citep{geifman2017selective}; influence estimation at scale similarly uses cheap curvature approximations \citep{koh2017influence,grosse2023influence}. We test whether reusing attribution errors within or across prompts improves accuracy after charging acquisition, fitting, and verification costs.

\section{Proposed Method}
\label{sec:method}

\subsection{Preliminaries}
\label{sec:preliminaries}

Consider a language model with $N$ candidate components, a clean prompt $x$, and a contrasting prompt $x'$. Write $a_i$ and $a_i'$ for component $i$'s activations on the two prompts and $\delta_i=a_i'-a_i$ for its patch direction. These activations may be scalars or vectors. Let $F(t)$, for $t\in\mathbb R^N$, be the prediction metric on $x$ when each component receives the additive shift $t_i\delta_i$ and later layers are recomputed; $F(0)$ is the clean run.

To isolate the effect of each component, let $v_i\in\mathbb R^N$ denote the coordinate unit vector with $i$th entry $1$ and $0$ elsewhere, and define the isolated AP effect of component $i$, denoted $\Delta_i$, and its \atp{} approximation $g_i$ as
\begin{equation*}
\Delta_i=F(v_i)-F(0),\qquad g_i=\partial_iF(0).
\end{equation*}
We call $e_i=\Delta_i-g_i$ the \atp{} residual, and evaluate an estimate $\widehat\Delta$ of $(\Delta_1,\ldots,\Delta_N)$ by its mean absolute error, $\operatorname{MAE}(\widehat\Delta)=N^{-1}\sum_{i=1}^N|\Delta_i-\widehat\Delta_i|$.

Though accurate, exact AP requires $N$ separate interventions per prompt, with a cost linear in both the number of components and the number of prompts. In comparison, \atp{} estimates all $N$ effects from one backward pass, but its linear approximation misses the curvature of a finite patch, so its residuals can be large \citep{zhang2026attribution}.

\subsection{Learning Shared Attribution Errors}
\label{sec:framework}
Given these two limitations, we seek a framework that balances estimation accuracy and computation. Our key observation is that \atp{} residuals follow structured patterns. First, a few components carry most of the error. Second, on a single prompt, the residuals are biased away from zero. Third, each component's residual is consistent across prompts of the same task. Figure~\ref{fig:motivation} demonstrates these patterns with a toy example on GPT-2.

\begin{figure}[t]
\centering
\includegraphics[width=\columnwidth]{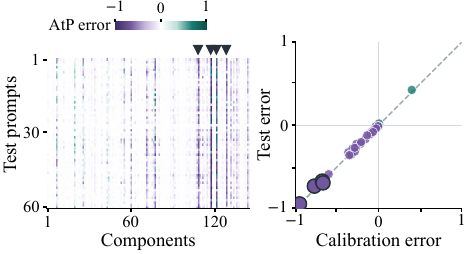}
\caption{\atp{} errors on GPT-2. (a) The top four components ($\blacktriangledown$) carry 80\% of the test error. (b) Calibration versus test error for each component, with the top four highlighted. The two errors are strongly correlated and consistently biased away from zero. }
\label{fig:motivation}
\end{figure}

Motivated by these patterns, we propose to measure only a few well-chosen components exactly, which removes most of the \atp{} error, and to use their residuals to correct the remaining unmeasured components. Specifically, given a budget of $B$ exact AP measurements per prompt, we aim to find a subset $S$ of the $N$ components with $|S|=B$ and construct the estimator
\begin{equation}
\widehat\Delta_i=
\begin{cases}
\Delta_i,&i\in S,\\
g_i+c_i,&i\notin S,
\end{cases}
\label{eq:estimator}
\end{equation}
where $c_i$ is learned from exact measurements. Since measured components are exact, choosing $S$ and $c$ to minimize the MAE of $\widehat\Delta$ gives
\begin{equation}
\begin{aligned}
\min_{S,\,c}\;\; & \sum_{i=1}^{N}|\widehat\Delta_i-\Delta_i|=\sum_{i\notin S}|g_i+c_i-\Delta_i|\\
\text{s.t.}\;\; & S\subseteq\{1,\ldots,N\},\ |S|=B.
\end{aligned}
\label{eq:utility}
\end{equation}
Equation~\ref{eq:utility} requires the unknown residuals $\Delta_i-g_i$ for $i\notin S$. The patterns above, however, can make these residuals predictable from measured ones. Within a prompt, measured components reveal the shared offset of the rest; across prompts, a component's residual on earlier prompts predicts its residual on new ones. We therefore learn the corrections and the measured set from calibration data $\mathcal D=\{(\Delta_{pi},g_{pi})\}$, the exact AP measurements and \atp{} estimates of component $i$ on prompt $p$ available for fitting, from earlier prompts or from the test prompt itself, with residuals $e_{pi}=\Delta_{pi}-g_{pi}$. We fit the residual, and then the size of what the fitted correction leaves, by absolute loss over $\mathcal D$,
\begin{equation}
\begin{aligned}
\widehat f&=\arg\min_{f}\|e-f\|_{1},\\
\widehat h&=\arg\min_{h}\big\||e-\widehat f|-h\big\|_{1},
\end{aligned}
\label{eq:fit}
\end{equation}
where $f$ and $h$ are evaluated at $(g_{pi},i)$ and may belong to any model class, such as a constant, a linear regression, or a neural network; in this paper $f$ is a constant and $h$ a log-linear model (Appendix~\ref{app:acquisition}). At test time, for a test prompt $p'$, each component $i$ receives a correction and a selection score,
\begin{equation}
\widehat c_{p'i}=\widehat f(g_{p'i},i),\qquad s_{p'i}=\widehat h(g_{p'i},i).
\label{eq:score}
\end{equation}
The budget goes to the components with the highest scores, and the rest receive corrections $g_{p'i}+\widehat c_{p'i}$. We call this procedure \guided{}.

The performance of \guided{} depends on what can be learned from measured residuals, since a measured residual helps unmeasured components only insofar as it reveals an error pattern they share. The patterns in Figure~\ref{fig:motivation} suggest two axes of sharing: across components within one prompt, and across prompts of the same task for a given component. These lead to two learning settings, which we call \textit{within-prompt learning} and \textit{across-prompt learning}, respectively. The former learns a prompt-specific offset from other components measured on that prompt; the latter learns component-specific error patterns from earlier prompts, informing both correction and verification on later prompts.
Within-prompt learning needs no prior labels, whereas across-prompt learning requires measurements from earlier prompts. We describe each setting's labels, correction $\widehat c$, and selection score $s$ below; Algorithm~\ref{alg:method} summarizes both.

\paragraph{Within-prompt learning.}
\label{sec:single-prompt}
This setting learns a prompt-specific offset from a few components measured on the same prompt, so it needs no prior labels but pays for its own. The budget is split into a pilot set $S_1$ that supplies the labels and a verification set $S_2$, with $|S_1|+|S_2|=B$. The pilot should represent the typical component, since the offset it estimates will be applied to all unmeasured ones. Because residual size varies with attribution, we stratify by $|g_{p'i}|$, dividing the components into $K$ equal groups $G_1,\ldots,G_K$ by attribution rank and drawing the pilot equally from each,
\begin{equation}
S_1=\bigcup_{k=1}^{K}U_k,\qquad U_k\subseteq G_k,
\label{eq:pilot}
\end{equation}
giving labels $\mathcal D=\{(\Delta_{p'j},g_{p'j})\}_{j\in S_1}$. The correction is this offset, estimated by the pilot median, which minimizes the absolute error \citep{koenker1978quantiles} and is insensitive to its few large residuals,
\begin{equation}
\widehat c_{p'i}=\operatorname{median}_{j\in S_1}e_{p'j}\quad\text{for all }i.
\label{eq:pilot-median}
\end{equation}
With at most one label per component, no component-specific size model can be fit, so the selection score is $s_{p'i}=|g_{p'i}|$, the only available proxy for residual size, and the remaining budget verifies the highest scores outside $S_1$. The pilot corrects the bulk; verification handles the tail.

\begin{algorithm}[t]
\caption{\guided{}}
\label{alg:method}
\small
\begin{algorithmic}[1]
\Require test prompt $p'$, budget $B$, calibration data $\mathcal D$ (possibly empty)
\State $g_{p'i}\gets\partial_iF_{p'}(0)$ for all $i$
\If{$\mathcal D=\varnothing$} \Comment{within-prompt learning}
  \State $S_1\gets$ stratified sample by $|g_{p'i}|$ \Comment{Eq.~\ref{eq:pilot}}
  \State $\mathcal D\gets\{(\Delta_{p'i},g_{p'i})\}_{i\in S_1}$
  \State $\widehat f\gets$ pilot median, $\widehat h\gets|g_{p'i}|$ \Comment{Eq.~\ref{eq:pilot-median}}
\Else \Comment{across-prompt learning}
  \State $S_1\gets\varnothing$
  \State $\widehat f,\widehat h\gets$ fit on $\mathcal D$ \Comment{Eq.~\ref{eq:fit}}
\EndIf
\State $\widehat c_{p'i}\gets\widehat f(g_{p'i},i)$, \ $s_{p'i}\gets\widehat h(g_{p'i},i)$ for all $i$ \Comment{Eq.~\ref{eq:score}}
\State $S_2\gets$ top $B-|S_1|$ of $s_{p'i}$ outside $S_1$
\State $S\gets S_1\cup S_2$
\State \Return $\widehat\Delta_{p'i}$ with estimated $\widehat c_{p'i}$ \Comment{Eq.~\ref{eq:estimator}}
\end{algorithmic}
\end{algorithm}

\paragraph{Across-prompt learning.}
\label{sec:many-prompt}
This setting learns from earlier prompts of the same task, so it needs labels but spends the whole test-time budget on verification. When a component's residual recurs across prompts, as in Figure~\ref{fig:motivation}, its residual on a new prompt can be predicted from earlier ones.
From these labels, $\widehat f$ in Equation~\ref{eq:fit} learns a per-component correction, and $\widehat h$ learns, from the component's attribution and identity, the size of the residual that correction leaves. Every component receives its correction $\widehat c_{p'i}$ and selection score $s_{p'i}$, and the $B$ components with the highest scores are measured exactly. The correction helps only when its labels are representative, meaning that each corrected component has enough labels measured uniformly across prompts. Otherwise the correction can mislead and raise the error, and we set $\widehat f=0$ in this case. For a component's average effect over many test prompts, we can correct the averaged \atp{} estimate by adding the component's mean labeled residual, with no exact measurement on these prompts. Further details are in Appendix~\ref{app:acquisition}.

\subsection{Theoretical Analysis}
\label{sec:theory}
We ask when each setting beats verification alone and why selection needs its own model. All results assume exact AP measurements; proofs are in Appendix~\ref{app:math}. Within a prompt, the pilot buys a shared correction at the price of displaced verification. Let $U$ be the unmeasured components, $A_U(c)=\sum_{i\in U}|e_i-c|$ their error after a shift $c$, and $m_U$ their median, the best shift; let $H_U=A_U(0)-A_U(m_U)$ be the gain of that shift, $E_U=A_U(\widehat c)-A_U(m_U)$ the estimation cost of the pilot median $\widehat c$, and $C$ the error removed by displaced verification.

\begin{theorem}[Within-prompt gain]
\label{thm:within}
Let $\widehat\Delta$ be the within-prompt estimate with a stratified pilot of $n$ labels, $n/K$ from each of $K$ equal-size strata, and $\widehat\Delta^0$ the estimate that spends the same budget verifying the largest $|g_i|$ with no correction (verify-more, Section~\ref{sec:setup}). Then
\begin{equation}
N\big(\operatorname{MAE}(\widehat\Delta^0)-\operatorname{MAE}(\widehat\Delta)\big)=H_U-E_U-C,
\label{eq:pilot-tradeoff}
\end{equation}
and, over the pilot draw, for all $t\in(0,\tfrac12)$, with probability at least $1-2e^{-2nt^2}$, $E_U\le\varepsilon_U(t)=\max\{A_U(q(\tfrac12-t)),A_U(q(\tfrac12+t))\}-A_U(m_U)$, where $q$ is the quantile function of all $N$ residuals.
\end{theorem}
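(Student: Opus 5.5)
The plan has two parts. First I prove the identity \eqref{eq:pilot-tradeoff} by pure bookkeeping on index sets. Then I prove the high-probability bound by controlling where the pilot median $\widehat c$ falls and using convexity of $A_U$.

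\textbf{The identity.} Let $V_0$ be the $B$ components with the largest $|g_i|$, which verify-more measures. Let $S=S_1\cup S_2$ be the set measured by the within-prompt estimate, so $|S|=|V_0|=B$, and let $U=\{1,\ldots,N\}\setminus S$. Measured components contribute zero error. Unmeasured components of $\widehat\Delta^0$ contribute $|e_i|$, and those of $\widehat\Delta$ contribute $|e_i-\widehat c|$. This gives
\begin{equation*}
N\,\operatorname{MAE}(\widehat\Delta^0)=\sum_{i\notin V_0}|e_i|,\qquad N\,\operatorname{MAE}(\widehat\Delta)=A_U(\widehat c).
\end{equation*}
Adding and subtracting gives $A_U(\widehat c)=A_U(0)-H_U+E_U$. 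I then make the informal phrase ``error removed by displaced verification'' precise by setting
\begin{equation*}
C=A_U(0)-\sum_{i\notin V_0}|e_i|=\sum_{i\in V_0\setminus S}|e_i|-\sum_{i\in S\setminus V_0}|e_i|.
\end{equation*}
This is the error that verify-more removes on the slots the pilot displaces, net of what the pilot's own measurements remove. Subtracting the two totals yields $H_U-E_U-C$ exactly. No assumption is needed here, and in particular $C$ need not be nonnegative.

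\textbf{The probability bound.} $A_U$ is convex and piecewise linear in $c$, and it is minimized at $m_U$. So on any interval $[a,b]$ its maximum is attained at an endpoint. Hence it suffices to show
\begin{equation*}
\Pr\big(\widehat c\in[q(\tfrac12-t),\,q(\tfrac12+t)]\big)\ge 1-2e^{-2nt^2},
\end{equation*}
because on that event $E_U\le\varepsilon_U(t)$.

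For the lower tail, let $Z$ be the fraction of pilot residuals with $e_j<q(\tfrac12-t)$. The event $\widehat c<q(\tfrac12-t)$ requires $Z\ge\tfrac12$.
\begin{itemize}
\item Write $Z=\frac1K\sum_k Z_k$, where $Z_k$ is the within-stratum fraction from $n/K$ draws.
\item Because the strata have equal size, $\E Z$ equals the population fraction, which is at most $\tfrac12-t$ by the definition of $q$.
\item $Z$ is an average of $n$ indicators, independent across strata. Within each stratum the draws are without replacement, which Hoeffding's 1963 comparison allows.
\item Hoeffding's inequality therefore gives $\Pr(Z-\E Z\ge t)\le e^{-2nt^2}$.
\end{itemize}
The upper tail is symmetric, using the fraction with $e_j>q(\tfrac12+t)$. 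A union bound finishes the argument.

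\textbf{Main obstacle.} I expect the hardest part to be the discrete details rather than any single inequality. Three points need care:
\begin{itemize}
\item fixing a convention for the sample median when $n$ is even;
\item fixing a convention for the quantile function $q$ when residuals are tied, so that the population fraction is indeed at most $\tfrac12-t$ (strict versus non-strict inequality);
\item justifying Hoeffding for stratified sampling without replacement.
\end{itemize}
I would first fix $q(u)=\inf\{x:\hat F(x)\ge u\}$, with $\hat F$ the empirical CDF of all $N$ residuals, and take the lower median. I would then check the two tails separately under these conventions.
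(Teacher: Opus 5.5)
Your proposal is correct and follows the paper's proof essentially step for step: the same bookkeeping identity with $C=\sum_{i\in V_0}|e_i|-\sum_{i\in S}|e_i|$, the convexity endpoint lemma for $A_U$, and Hoeffding's inequality for without-replacement counts that are independent across strata, with a union bound over the two tails. The paper's version also carries terms for a runtime-matched baseline size $B_0\ne B$ and an acquisition discrepancy $D_{\rm acq}$, both of which vanish under the statement's hypotheses, and your explicit median and quantile conventions make the tail step slightly more careful than the paper's.
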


The pilot pays when the unmeasured residuals share a sign, so $H_U$ is large, and cluster near their median, so $E_U$ is small, by more than $C$.  We first analyze a component-specific median correction learned from representative calibration labels. This result concerns absolute error on individual prompts. Correcting dataset averages instead requires estimating the mean residual. For independent prompt residuals with quantile function $q$ and median $m$, let $R(c)=\E|e-c|$ and $\Gamma=R(0)-R(m)$ be the best shift's gain.

\begin{theorem}[Median correction from representative labels]
\label{thm:generalization}
 Let $\widehat c$ be the median of $n$ labels drawn independently from the component's residual law, and assume $\E|e|<\infty$. For any $t\in(0,\tfrac12)$, with probability at least $1-2e^{-2nt^2}$,
\begin{equation}
R(\widehat c)\le R(0)-\Gamma+2t\,w(t),
\label{eq:generalization}
\end{equation}
where $w(t)=q(\tfrac12+t)-q(\tfrac12-t)$.
\end{theorem}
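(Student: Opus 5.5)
The plan has two parts. First, a concentration argument traps the sample median between two population quantiles. Second, a deterministic argument bounds how much $R$ can rise above its minimum $R(m)$ on that interval.

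\textbf{Step 1 (localizing $\widehat c$).} Write $F$ for the CDF of the residual law, $q$ for its quantile function, and $a=q(\tfrac12-t)$, $b=q(\tfrac12+t)$. For the sample median to exceed $b$, at least half of the $n$ labels must lie above $b$. The probability that a single label lies above $b$ is at most $\tfrac12-t$, so Hoeffding's inequality applied to the indicators $\mathbf 1\{e_j>b\}$ bounds this event by $e^{-2nt^2}$. Symmetrically, the indicators $\mathbf 1\{e_j<a\}$ bound the event $\widehat c<a$ by $e^{-2nt^2}$. A union bound then gives $\widehat c\in[a,b]$ with probability at least $1-2e^{-2nt^2}$. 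The paper fixes its median convention at the ties/atoms; with any convention (lower, upper, or midpoint) the endpoints carry the same Hoeffding bound, so this is routine. The same step also appears in Theorem~\ref{thm:within}, which gives the bound there through $A_U$ evaluated at the same quantiles.

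\textbf{Step 2 (flatness of $R$ near the median).} The function $R(c)=\E|e-c|$ is convex and finite because $\E|e|<\infty$. Its right derivative is $2F(c)-1$ and its left derivative is $2F(c^-)-1$. On $[m,b]$ we have $F(c^-)\le F(b^-)\le \tfrac12+t$, since $b$ is the $(\tfrac12+t)$-quantile. Hence every subgradient of $R$ on $[m,b)$ is at most $2t$, and
\begin{equation*}
R(c)-R(m)\le 2t(c-m)\le 2t(b-m) \quad (c\in[m,b]).
\end{equation*}
The same argument on $[a,m]$ uses $F(c)\ge F(a)\ge\tfrac12-t$, which bounds the slope below by $-2t$ and gives $R(c)-R(m)\le 2t(m-a)$. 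Both $b-m$ and $m-a$ are at most $w(t)=b-a$, because $m\in[a,b]$.

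\textbf{Step 3 (combine).} On the good event, $R(\widehat c)\le R(m)+2t\,w(t)=R(0)-\Gamma+2t\,w(t)$, since $R(m)=R(0)-\Gamma$ by definition of $\Gamma$. This is \eqref{eq:generalization}.

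The main obstacle is the bookkeeping at atoms and flat stretches of $F$. Two things can go wrong there: the one-sided derivative bound can fail exactly at $b$ when $F$ jumps, and the inequality $\Pr(e>b)\le\tfrac12-t$ depends on using the left-continuous quantile $q(u)=\inf\{x:F(x)\ge u\}$. To handle this, I would fix that definition of $q$ and integrate the left derivative only over the half-open interval $[m,c)$, where $F<\tfrac12+t$ holds except possibly at isolated points. Because $R$ is absolutely continuous, jumps in $F$ at single points do not affect the integral.
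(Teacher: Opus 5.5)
Your proposal is correct and follows the paper's proof essentially step for step: Hoeffding on the count of labels beyond $q(\tfrac12\pm t)$ traps $\widehat c$ in the band, and the slope $2F-1$ of the convex risk $R$, at most $2t$ in absolute value there, gives $R(\widehat c)-R(m)\le 2t\,w(t)$ with $R(m)=R(0)-\Gamma$. Your one-sided-derivative handling of atoms is slightly more careful than the paper's almost-everywhere version of the same argument, but the structure is identical.
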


 The bound explains when recurring residual bias makes a median correction useful. Magnitude-guided acquisition need not satisfy its sampling assumption. For dataset-average correction, an offset improves a component's estimate when it has the same sign as the average residual and less than twice its magnitude (Appendix~\ref{app:selection-bias}). Selection asks a different question. Equation~\ref{eq:utility} verifies the components whose error after correction is largest, so a score must predict that error's size. Fix the corrections and let $r_i=\E|e_i-\widehat c_i|$ be component $i$'s expected remaining error given what is known before measurement.

\begin{proposition}[Selection needs size]
\label{prop:selection}
If $\max_i|s_i-r_i|\le\epsilon$, selecting the $B$ highest scores costs at most $2B\epsilon$ extra expected error over the $B$ highest $r_i$.
\end{proposition}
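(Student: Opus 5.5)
Fix the corrections, so each component $i$ has expected remaining error $r_i=\E|e_i-\widehat c_i|$ if left unmeasured and $0$ if verified. For a chosen set $S$ with $|S|=B$, the expected error is $\sum_{i\notin S}r_i=\sum_i r_i-\sum_{i\in S}r_i$. The difference in expected error between the score-based set $\widehat S$ (the $B$ highest $s_i$) and the oracle set $S^*$ (the $B$ highest $r_i$) is therefore
\begin{equation*}
\sum_{i\in S^*}r_i-\sum_{i\in\widehat S}r_i .
\end{equation*}
It suffices to bound this by $2B\epsilon$. The exactness of measured components, as in Equation~\ref{eq:estimator}, is what makes verified terms vanish.

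The key step is the standard perturbation argument for top-$B$ selection. Since $|s_i-r_i|\le\epsilon$ for every $i$, we have
\begin{equation*}
\sum_{i\in S^*}r_i\le\sum_{i\in S^*}s_i+B\epsilon .
\end{equation*}
Because $\widehat S$ maximizes $\sum_{i\in S}s_i$ over all sets of size $B$, this is at most $\sum_{i\in\widehat S}s_i+B\epsilon$. Applying the bound again gives at most $\sum_{i\in\widehat S}r_i+2B\epsilon$. Rearranging yields the claim.

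One can sharpen the argument by restricting the sums to the symmetric difference $S^*\setminus\widehat S$ and $\widehat S\setminus S^*$, which have equal size $k\le B$. This gives $2k\epsilon$, but $2B\epsilon$ is all the statement requires.

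The only point needing care is the meaning of ``expected error given what is known before measurement.'' The selection $\widehat S$ depends only on pre-measurement information, namely the scores computed from $g_{p'i}$ and the calibration data. So conditional on that information, $\widehat S$ is fixed and linearity of expectation applies term by term. I would state this conditioning explicitly and handle ties in $s_i$ or $r_i$ by an arbitrary fixed tie-breaking rule. The maximality of $\widehat S$ for $\sum s_i$ holds for any tie-break, so no step should be a real obstacle.
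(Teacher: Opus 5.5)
Your proof is correct and matches the paper's argument: both write the excess expected error as $\sum_{i\in S^*}r_i-\sum_{i\in\widehat S}r_i$, apply $r_i\le s_i+\epsilon$ on the oracle set and $r_i\ge s_i-\epsilon$ on the score-selected set, and use that the top-$B$ scores maximize the score sum over sets of size $B$. Your symmetric-difference sharpening to $2k\epsilon$ and the explicit conditioning remark are valid, though the stated bound does not need them.
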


Scores must track the expected size of the remaining error, not the size of the expected error. A component whose residual is large but changes sign has $\widehat f$ near zero and large $r_i$. This is why $h$ is fit separately from $f$ in Equation~\ref{eq:fit}, and why a single prompt, which cannot fit $h$, falls back to $|g|$.

\section{Experiments}
\label{sec:results}
\subsection{Experimental Setup}
\label{sec:setup}

\begin{figure*}[t]
\centering
\includegraphics[width=\textwidth]{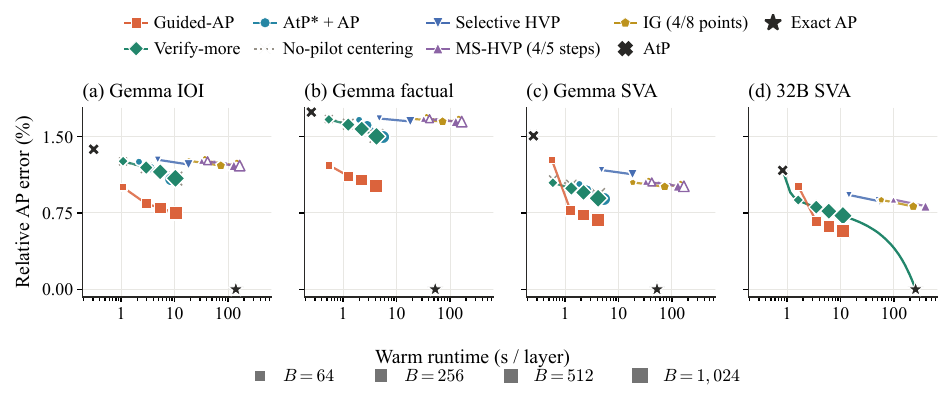}
\caption{Warm runtime and relative AP error for within-prompt \guided{} and the baselines. Square size denotes the measurement budget $B$; hollow derivative markers denote larger MS-HVP and IG settings. Appendix~\ref{app:single-prompt-comparators} gives the timing details.}
\label{fig:baseline-comparison}
\end{figure*}
\paragraph{Models, tasks, and prompts.}
For within-prompt \guided{} (Section~\ref{sec:res:single}), we study MLP neurons of Gemma-2-9B \citep{gemmateam2024gemma2}, Qwen3-14B, and Qwen3-32B \citep{yang2025qwen3}. For across-prompt \guided{} (Section~\ref{sec:res:across}), we study attention heads of Qwen3-8B and Qwen2.5-7B-Instruct \citep{qwen2025qwen25}. Tasks: indirect-object identification (IOI), factual completion, and subject-verb agreement (SVA). Score: logit difference between correct and counterfactual tokens. Evaluation prompts are held out; across-prompt calibration uses separate prompts in one prespecified evaluation. Figure~\ref{fig:motivation} uses separate exact effects from GPT-2.

\paragraph{Methods and baselines.}
Within-prompt \guided{} uses 256 exact measurements per layer. Across-prompt \guided{} verifies $B\in\{16,32,64,128\}$ heads per prompt and separately corrects dataset averages without measuring the evaluation prompts. Our main baseline, \textit{verify-more}, spends the same cost on exact measurements of the components with the largest \atp{} magnitudes and leaves the rest uncorrected. Across-prompt labels are acquired on large attributions, so verification sets $\widehat f=0$ (Section~\ref{sec:many-prompt}) and its comparison with verify-more isolates selection. An alternative calibration signal, \probecal{}, replaces the exact calibration labels with curvature-probe sketches, randomized estimates of the Hessian diagonal \citep{bekas2007diagonal}, and fits the size model and average correction from them (Appendix~\ref{app:probecal}). On Gemma-2-9B, we also compare with verify-more on \atpstar{} estimates, selective HVP, MS-HVP, singleton IG, and exhaustive AP (Appendices~\ref{app:single-prompt-comparators} and~\ref{app:protocol}). All costs include pilot, calibration, and fitting. We report measured warm runtime per layer within prompts, and calibration plus projected deployment cost across prompts.

\paragraph{Metrics and uncertainty.}
Primary: per-prompt effect-vector MAE against exact FP32 AP (Section~\ref{sec:preliminaries}). Additional metrics: relative AP error (summed absolute error over summed absolute effects), dataset-average error, and recall of the ten largest effects. Error reduction: one minus the ratio of mean errors. Intervals: paired group bootstraps, adjusted over the prespecified across-prompt comparison family and descriptive within prompts (Appendices~\ref{app:single-prompt-evaluation}, \ref{app:protocol}, and~\ref{app:cost}).

\setcounter{topnumber}{2}
\setcounter{dbltopnumber}{1}
\renewcommand{\topfraction}{0.45}
\subsection{Within-Prompt Learning}
\label{sec:res:single}
Within-prompt \guided{} lowers Gemma-2-9B MAE by 17\% to 26\% across all three tasks at the same measured runtime as verify-more, without earlier calibration prompts (Table~\ref{tab:single-prompt}). Top-ten recall remains nearly perfect for both methods.

\begin{table}[H]
\centering

\small
\setlength{\tabcolsep}{2.5pt}
\begin{tabular}{lrrrr}
\toprule
Task & Groups & \shortstack{Lower\\MAE} & \shortstack{Relative\\AP error} & \shortstack{Warm time\\(s/layer)}\\
\midrule
\multicolumn{5}{l}{Gemma-2-9B}\\
IOI & 100 & 17.4\% & 0.94\% & 2.08\\
Factual & 100 & 26.2\% & 1.20\% & 1.30\\
SVA & 100 & 19.9\% & 0.71\% & 1.25\\
\midrule
\multicolumn{5}{l}{Qwen3-14B}\\
IOI & 52 & 24.0\% & 1.34\% & 2.95\\
Factual & 56 & 12.3\% & 0.57\% & 1.67\\
SVA & 53 & 10.9\% & 0.58\% & 1.64\\
\midrule
\multicolumn{5}{l}{Qwen3-32B}\\
IOI & 10 & 17.3\% & 1.09\% & 6.52\\
Factual & 31 & 17.5\% & 1.09\% & 3.54\\
SVA & 40 & 13.7\% & 0.66\% & 3.55\\
\bottomrule
\end{tabular}
\caption{Within-prompt \guided{} accuracy and warm runtime. MAE reductions compare with runtime-matched verify-more. Groups is the number of evaluation prompts scored on all three tested MLP layers (Appendix~\ref{app:single-prompt-evaluation}).}\label{tab:single-prompt}\label{tab:larger-models}
\end{table}

\paragraph{Correction mechanism.}
Correcting unmeasured estimates drives the gain. The fitted shift captures 96\% to 97\% of the best common shift's benefit, outweighing displaced verification (Theorem~\ref{thm:within}; Appendix~\ref{app:mechanism-precision}). Gains persist at equal measurement counts (Appendix~\ref{app:single-prompt-evaluation}). FP64 readout rescoring of the same estimates retains 10\% to 23\% lower Gemma MAE and positive intervals on every larger-model task (Appendix~\ref{app:mechanism-precision}).

\begin{figure}[t]
\centering
\includegraphics[width=\columnwidth]{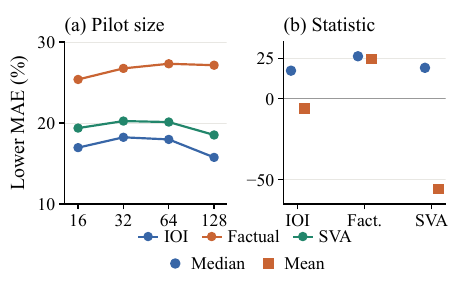}
\caption{Within-prompt \guided{} ablations on Gemma-2-9B at 256 measurements: pilot size (left) and mean versus median (right). Values show MAE reductions versus verification alone. Cohorts: Appendix~\ref{app:cpu-controls}.}
\label{fig:pilot-ablations}
\end{figure}

\paragraph{Controls and ablations.}
Measurements chosen for verification need not represent unmeasured errors. Shifting the remaining estimates by the median residual of those measurements raises MAE in eight of nine settings. The pilot lowers Gemma MAE by 17\% to 24\% relative to the better of this control and verification alone (Table~\ref{tab:pilot-controls}).

\begin{table}[H]
\centering

\small
\setlength{\tabcolsep}{3pt}
\begingroup
\begin{tabular*}{\columnwidth}{@{\extracolsep{\fill}}lrrr@{}}
\toprule
Task & Verify only & Verify + shift & \guided{}\\
\midrule
IOI & 1.00 & 1.93 & \textbf{0.83}\\
Factual & 1.00 & 0.97 & \textbf{0.74}\\
SVA & 1.00 & 1.80 & \textbf{0.81}\\
\bottomrule
\end{tabular*}
\endgroup
\caption{Within-prompt \guided{} controls on Gemma-2-9B at the same 256-measurement budget. MAE is relative to verification alone; lower is better.}\label{tab:pilot-controls}
\end{table}

On Gemma, \guided{} lowers group-averaged MAE against runtime-matched verify-more in 96 of 100 IOI groups, 97 of 100 factual groups, and 98 of 100 SVA groups (Appendix~\ref{app:mechanism-precision}).

A modest pilot captures most gains. Pilots of 32 or 64 improve on 16, and 128 adds no consistent gain (Figure~\ref{fig:pilot-ablations}a; Appendix Table~\ref{tab:pilot-size-acquired}). Every tested pilot size lowers MAE on all three Gemma tasks, leaving flexibility in how the budget is divided between pilot measurements and exact verification. The median beats the mean on identical labels in all nine settings; uniform and magnitude-stratified samples are similar on Gemma (Figure~\ref{fig:pilot-ablations}b; Appendix~\ref{app:cpu-controls}).

\subsection{Across-Prompt Learning}
\label{sec:res:across}

We compare across-prompt \guided{} with \probecal{} (Section~\ref{sec:setup}). We assess verification against verify-more at matched total cost, and dataset-average correction against uncorrected \atp{}, without evaluation measurements for the latter.

\paragraph{Verification allocation.}
\label{sec:res:allocation}
Allocation needs transfer and reuse. On Qwen3 IOI, \guided{} lowers MAE by 12\% against verify-more at matched projected total cost, calibration included (Table~\ref{tab:allocation}).

\begin{table}[H]
\centering

\small
\setlength{\tabcolsep}{2pt}
\begin{tabular*}{\columnwidth}{@{\extracolsep{\fill}}llrrrr@{}}
\toprule
Task & Method & \multicolumn{2}{c}{Qwen3-8B} & \multicolumn{2}{c}{Qwen2.5-7B}\\
\cmidrule(lr){3-4}\cmidrule(lr){5-6}
 & & $B=16$ & $B=128$ & $B=16$ & $B=128$\\
\midrule
IOI & \guided{} & \textbf{12\%} & \textbf{7\%} & \textit{$-$29\%} & 12\%\\
 & \probecal{} & \textbf{17\%} & \textbf{19\%} & \textit{$-$39\%} & 15\%\\
\addlinespace[1pt]
Factual & \guided{} & \textit{$-$20\%} & 2\% & \textit{$-$18\%} & $-$2\%\\
 & \probecal{} & \textit{$-$17\%} & \textbf{31\%} & $-$5\% & \textbf{35\%}\\
\addlinespace[1pt]
SVA & \guided{} & \textbf{3\%} & \textbf{4\%} & \textbf{2\%} & \textbf{3\%}\\
 & \probecal{} & 2\% & \textbf{12\%} & 1\% & \textbf{19\%}\\
\bottomrule
\end{tabular*}
\caption{Across-prompt MAE reductions relative to verify-more at matched projected total cost, calibration included, for 10,000 prompts. Bold and italic mark adjusted significant improvements and harms.}\label{tab:allocation}
\end{table}

\label{sec:res:savings}
Where learning pays, projected break-even at the smallest budget ranges from 1,900 to 24,000 prompts across the two calibration signals. Shorter analyses favor additional verification; verify-more also remains strong on factual completion at small budgets and on Qwen2.5 IOI (Appendices~\ref{app:frontiers} and~\ref{app:cost}).

\paragraph{Dataset-average correction.}
\label{sec:res:dataset}
Persistent average errors can be corrected without measuring each new prompt. On factual completion, \guided{} lowers dataset-average MAE by 75\% to 79\% on both models relative to uncorrected \atp{} (Figure~\ref{fig:dataset-mean-main}; Appendix Table~\ref{tab:dataset}). Post hoc cost comparisons give projected payback of 500 to 2,700 prompts where correction helps; shorter analyses favor verify-more (Appendix~\ref{app:controls}).

Learning residuals makes better use of calibration measurements. Directly averaging the same AP effects produces 5.8 to 14.9 times more error than residual correction in every setting (Appendix~\ref{app:controls}). On factual completion, \guided{} labels cover 13\% to 18\% of heads; the rest retain their \atp{} averages.

Signed error transfer can fail. \guided{} raises Qwen2.5 IOI dataset-average MAE by 52\%, consistent with selection bias (Appendix~\ref{app:selection-bias}). SVA has no established correction gain, and neither correction establishes a dataset-average recall gain.

\paragraph{Calibration signals.}
\probecal{} helps allocation where exact-label learning is less effective, lowering factual MAE by 31\% to 35\% against verify-more at matched projected total cost (Table~\ref{tab:allocation}). Its factual average correction reduces MAE by 46\% against uncorrected \atp{}, less than \guided{} (Figure~\ref{fig:dataset-mean-main}); it helps Qwen2.5 IOI averages where \guided{} harms them (Appendix Table~\ref{tab:dataset}).

\paragraph{Error structure and recall.}
\label{sec:res:two}
Verification and average correction require different error information. Factual errors recur (Appendix~\ref{app:stability}), yet learned selection can lose at small budgets because magnitude ranking already captures the largest residuals. SVA improves verification without an established mean-correction gain. Lower MAE also need not improve top-ten recall (Appendix~\ref{app:frontiers}).

\begin{figure}[t]
\centering
\includegraphics[width=\columnwidth]{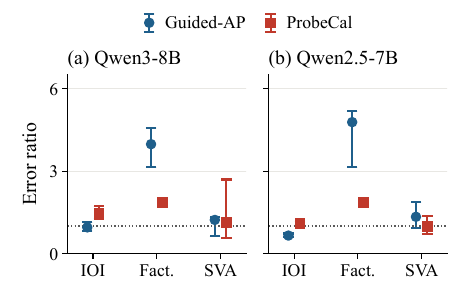}
\caption{Dataset-average correction on the two models. Ratios are uncorrected over corrected MAE, with adjusted intervals; the dotted line denotes equality.}
\label{fig:dataset-mean-main}
\end{figure}

\subsection{Scalability and Efficiency}
\label{sec:res:scalability}
We separate accuracy gains at a fixed measurement budget from runtime savings at an error target.

\paragraph{Component count and accuracy.}
With the measurement budget fixed, the benefit persists at every tested component count across all three Gemma tasks, up to 14,336 neurons (Appendix~\ref{app:component-scaling}).

Runtime savings depend on the required accuracy. At a 1\% mean relative AP error target, \guided{} is 7.9 times faster than verify-more on Gemma IOI in measured warm runtime. This runtime includes attribution, pilot measurements, and fitting on the same prompt; the gain requires no reuse across prompts. Verification is cheaper on SVA, and \guided{} misses the factual target (Table~\ref{tab:single-prompt-tolerance}; Appendix~\ref{app:single-prompt-tolerance}).

\begin{table}[H]
\centering

\small
\setlength{\tabcolsep}{3pt}
\begin{tabular}{lrrrr}
\toprule
& \multicolumn{2}{c}{\guided{}} & \multicolumn{2}{c}{Verification to 1\%}\\
\cmidrule(lr){2-3}\cmidrule(lr){4-5}
Task & Error & s/layer & Checks & s/layer\\
\midrule
IOI & 0.94\% & 2.08 & 2,416 & 16.37\\
Factual & \textit{1.20\%} & 1.30 & 5,664 & 21.84\\
SVA & 0.71\% & 1.25 & 32 & 0.43\\
\bottomrule
\end{tabular}
\caption{Warm cost at a 1\% mean relative AP error target on Gemma-2-9B. Italic errors exceed the target; checks count magnitude-ranked AP measurements.}\label{tab:single-prompt-tolerance}
\end{table}

\paragraph{Model size.}
The same budget yields 11\% to 24\% lower MAE on Qwen3-14B and 14\% to 18\% on Qwen3-32B at matched runtime, measuring only 1\% of the latter's neurons (Table~\ref{tab:larger-models}; Figure~\ref{fig:scaling-comparators}a). At \guided{}'s achieved error, a post hoc comparison finds verify-more takes 5.9 to 14.7 times longer in warm runtime across the three 32B tasks (Figure~\ref{fig:scaling-comparators}b; Appendix Table~\ref{tab:32b-matched-accuracy}). This frontier comparison uses the same prompts and matches the correction's observed error, separately from the prespecified targets.

\paragraph{Comparison with derivative corrections.}
On the 30-group Gemma comparison set, \guided{} has lower MAE than selective HVP, MS-HVP, and singleton IG at their tested settings, taking about one to three warm seconds per layer (Figure~\ref{fig:baseline-comparison}; Appendix~\ref{app:single-prompt-comparators}). \atp{} remains fastest; exhaustive AP removes approximation error at greater cost.

\begin{figure}[t]
\centering
\includegraphics[width=\columnwidth]{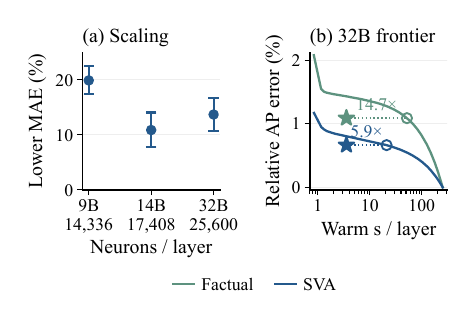}
\caption{Scaling of the fixed budget. Left, SVA MAE reductions at matched cost with descriptive intervals. Right, measured 32B verification curves for factual completion and SVA (lines), within-prompt \guided{} (stars), and verification at the same error (open circles), on the same complete groups; IOI: Appendix Table~\ref{tab:32b-matched-accuracy}.}
\label{fig:scaling-comparators}
\end{figure}

\section{Conclusion and Limitations}
Attribution patching trades the accuracy of activation patching for speed. \guided{} uses a few exact measurements to improve estimates beyond the ones it verifies. On a single prompt, the median error of a small exact sample corrects the unmeasured estimates, as long as enough budget remains to verify the largest ones. Over many prompts of one task, a few measured prompts reveal each component's recurring error. Its size guides verification on later prompts, and its direction corrects average effects, which repays the cost of learning in long analyses. Future work could combine the two settings and test them on sparse features, where activation patching is most costly.

Our evaluation focuses on transformer language models and controlled language tasks. Further evaluation on other architectures, component representations, and open-ended language behaviors would clarify its broader applicability. Accuracy and computational costs depend on the task, measurement budget, hardware, and implementation. Activation patching measures effects relative to a chosen intervention and output metric. These estimates describe individual components and can inform mechanistic explanations alongside tests of how components work together.

\bibliography{references}
\appendix
\setcounter{topnumber}{2}
\renewcommand{\topfraction}{0.7}
\section{Label Acquisition and Fits}
\label{app:acquisition}
This appendix specifies label acquisition and the fitted correction $\widehat f$ and size model $\widehat h$. Acquired AP values and residuals are written $\widetilde\Delta_i$ and $\widetilde e_i=\widetilde\Delta_i-g_i$. They equal the scoring reference $\Delta_i$ and residual $e_i$ in the primary analysis; Appendix~\ref{app:mechanism-precision} distinguishes them for FP64 rescoring.

\subsection{Within-Prompt Learning}
\label{app:single-prompt}
\textbf{Pilot.} Sort components by increasing $|g_i|$, breaking ties by component order, and partition them into $K=8$ equal-sized rank strata. Independently permute each stratum and take one component per stratum in round-robin order until the pilot is full (Equation~\ref{eq:pilot}). All evaluated populations are divisible by eight.

\textbf{Correction and selection.} Fit one offset, the median acquired pilot residual (Equation~\ref{eq:pilot-median}). Set $\widehat h=|g_{p'i}|$ and use the remaining $B-|S_1|$ measurements on the largest magnitudes outside the pilot. Runtime includes both measurement sets, selection, fitting, and estimate assembly.

\subsection{Across-Prompt Learning}
\label{app:many-prompt}
\textbf{Label acquisition.} On each calibration prompt, sort components by decreasing $|g|$. Visit prompts in fixed round-robin order, taking the next unacquired component until the label budget is exhausted.

\textbf{Size model $\widehat h$.} Each acquired label supplies $x=\log(|g|+\eta)$ and $y=\log(|\widetilde e-\widehat c_i|+\eta)$ with $\eta=10^{-12}$, where $\widehat c_i$ is the correction fit below; with $\widehat f=0$, as in our experiments, $y=\log(|\widetilde e|+\eta)$. A pooled least-squares fit gives slope $a$ and intercept $\beta$; component $i$ then receives its own intercept $\beta_i=\beta+w_i(\operatorname{mean}_i[y-ax]-\beta)$ with $w_i=n_i/(n_i+5)$, where $n_i$ is its number of labels, and the pooled intercept when it has none. The score is
\begin{equation*}
\widehat h(g,i)=\exp\big(a\log(|g|+\eta)+\beta_i\big),
\end{equation*}
a log-linear model fit by least squares in log space as a surrogate for the absolute-loss objective of Equation~\ref{eq:fit}. The score approximates expected remaining error $r_i$, the target of Proposition~\ref{prop:selection}.

\textbf{Correction $\widehat f$.} Equation~\ref{eq:fit} restricted to one constant per component is minimized by the median of each component's acquired residuals. Our verification experiments set $\widehat f=0$ (Section~\ref{sec:setup}).

\textbf{Dataset averages.} Let $\mathcal C_i$ be the calibration prompts on which component $i$ was measured. Its average correction is the mean acquired residual,
\begin{equation}
\bar c_i=
\begin{cases}
|\mathcal C_i|^{-1}\sum_{p\in\mathcal C_i}e_{pi},&|\mathcal C_i|>0,\\
0,&|\mathcal C_i|=0,
\end{cases}
\label{eq:dataset-mean-correction}
\end{equation}
with no shrinkage, and the corrected average is $\bar g_i+\bar c_i$. The target is the average residual over prompts, so it is estimated by an average; Lemma~2 states when such a correction helps. For probes, Equation~\ref{eq:probe-mean-correction} plays the same role.

\section{Curvature-Probe Calibration}
\label{app:probecal}
\probecal{} supplies an alternative learning signal through shared curvature probes. When $F_p$ is twice differentiable at the baseline, let $G_p=\nabla^2F_p(0)$ be its Hessian in intervention coordinates for calibration prompt $p$. With independent random sign vectors $z_{pr}\in\{-1,1\}^N$, its sketch is
\begin{equation*}
d_{pi}=\frac1R\sum_{r=1}^R z_{pr,i}(G_pz_{pr})_i.
\end{equation*}
The randomized diagonal estimator of \citet{bekas2007diagonal} is computed with Hessian-vector products \citep{pearlmutter1994hvp}. Its expectation is baseline curvature; Equation~\ref{eq:variance} gives its interaction-dependent noise, and Equation~\ref{eq:remainder} distinguishes curvature from the finite AP residual. For $J$ calibration prompts with $R$ probes each, the signed average correction is
\begin{equation}
\bar c_i^{\rm probe}=\frac1{2J}\sum_{p=1}^J d_{pi},
\label{eq:probe-mean-correction}
\end{equation}
and the corrected average is $\bar g_i+\bar c_i^{\rm probe}$. Averaging retains the signs; a sketch estimates baseline curvature, not the complete finite-AP residual.
\probecal{} uses the sketches as labels for the size model. Every component-prompt pair supplies $x_{pi}=\log(|g_{pi}|+\eta)$ and $y_{pi}=\log(|d_{pi}|+\eta)$, where $d_{pi}$ averages the signed probes before the magnitude is taken. A pooled least-squares fit gives $(a,\beta)$; a component with varying log attribution across prompts also receives its own pair $(a_i,\beta_i)$, shrunk toward the pooled values, and its score is $\widehat h(g,i)=\exp(a_i\log(|g|+\eta)+\beta_i)$. A capacity-matched AP-only control, fit the same way on exact labels, separates the label source from the model class. Probes are computed only on calibration prompts; no probe is run on evaluation prompts.

\paragraph{Probe noise and the finite-intervention remainder.}
For a fixed Hessian $G$ and independent sign vectors, $z_i(Gz)_i=G_{ii}+\sum_{j\ne i}G_{ij}z_iz_j$, so conditional on $G$ \citep{bekas2007diagonal}
\begin{equation}
\E[d_i\mid G]=G_{ii},\qquad \Var(d_i\mid G)=\frac1R\sum_{j\ne i}G_{ij}^2,
\label{eq:variance}
\end{equation}
and $\operatorname{Cov}(d_i,d_k\mid G)=G_{ik}^2/R$ for $i\ne k$. Interactions between components set the noise of each sketch and correlate sketches across components, so they are not independent samples. The sketch also targets only the curvature term of the residual. For a twice differentiable path $f_i(t)=F(tv_i)$, Taylor's theorem gives
\begin{equation}
\begin{aligned}
e_i&=\tfrac12G_{ii}+\rho_i,\\
\rho_i&=\int_0^1(1-t)\{f_i''(t)-f_i''(0)\}\,dt,
\end{aligned}
\label{eq:remainder}
\end{equation}
Thus the sketches supply a curvature-based learning signal rather than exact finite-intervention residual labels.

\section{Mathematical Details}
\label{app:math}

\subsection{Proof of Theorem~\ref{thm:within}}
\label{app:pilot-proof}
We extend the identity to the runtime-matched baseline of the experiments, which may measure $B_0\ne B$ components, and acquired AP values $\widetilde\Delta_i$ that may differ from the reference $\Delta_i$. Fix $g$ and $\Delta$ and let $e=\Delta-g$. The method measures $S$ with $|S|=B$ and shifts the unmeasured set $U$ by $\widehat c$; the baseline measures $V_{B_0}$, the $B_0$ largest $|g_i|$, and leaves the rest at $g_i$. Let $D_{\rm acq}=\sum_{i\in S}|\widetilde\Delta_i-\Delta_i|-\sum_{i\in V_{B_0}}|\widetilde\Delta^0_i-\Delta_i|$ be the difference in acquisition error, and $C=\sum_{i\in V_{B_0}}|e_i|-\sum_{i\in S}|e_i|$ the verification opportunity cost. For the decomposition reported in Appendix~\ref{app:single-prompt-evaluation}, $C$ splits into an allocation term and a count term,
\begin{equation*}
\begin{aligned}
C_{\rm alloc}&=\sum_{i\in V_B}|e_i|-\sum_{i\in S}|e_i|,\\
C_{\rm count}&=\sum_{i\in V_{B_0}}|e_i|-\sum_{i\in V_B}|e_i|,
\end{aligned}
\end{equation*}
which isolate the change in which $B$ components are measured and the extra or fewer measurements affordable at matched runtime.

\textbf{The identity.} The summed errors of the two estimates are
\begin{equation*}
\begin{aligned}
L_0&=\sum_{i\notin V_{B_0}}|e_i|+\sum_{i\in V_{B_0}}|\widetilde\Delta^0_i-\Delta_i|,\\
L_1&=A_U(\widehat c)+\sum_{i\in S}|\widetilde\Delta_i-\Delta_i|.
\end{aligned}
\end{equation*}
Subtracting, $L_0-L_1=A_U(0)-A_U(\widehat c)-C-D_{\rm acq}$, and adding and subtracting $A_U(m_U)$ gives
\begin{equation*}
N(\operatorname{MAE}_0-\operatorname{MAE}_1)=H_U-E_U-C-D_{\rm acq}.
\end{equation*}
Since $A_U$ is convex with a minimum at any median, $H_U,E_U\ge0$. When acquired values equal the reference, $D_{\rm acq}=0$ and Equation~\ref{eq:pilot-tradeoff} follows. $\square$

\textbf{The estimation cost.} The bound on $E_U$ uses one fact about convex functions.

\noindent\textbf{Lemma 1.} If $\widehat c\in[a,b]$, then $A_U(\widehat c)\le\max\{A_U(a),A_U(b)\}$.

\textbf{Proof.} $A_U$ is convex, and every point of $[a,b]$ is a convex combination of its endpoints. $\square$

Let $q$ be the quantile function of the $N$ residuals and $n$ the pilot size. The pilot is drawn without replacement, equally from $K$ equal-sized strata (Equation~\ref{eq:pilot}), with independent draws across strata, so the number of pilot residuals below $q(\tfrac12-t)$ is a sum over strata of independent without-replacement counts with mean at most $n(\tfrac12-t)$. Hoeffding's inequality holds for sampling without replacement and for independent sums of such counts \citep{hoeffding1963}, so the pilot median falls below $q(\tfrac12-t)$ with probability at most $e^{-2nt^2}$, and symmetrically above $q(\tfrac12+t)$. On the complementary event, Lemma~1 with $[a,b]=[q(\tfrac12-t),q(\tfrac12+t)]$ gives
\begin{equation*}
E_U\le\max\{A_U(a),A_U(b)\}-A_U(m_U)\eqqcolon\varepsilon_U(t),
\end{equation*}
the excess error of the worst shift in the band. The probability statement concerns only the pilot median against the quantiles of all $N$ residuals; Lemma~1 then applies to whichever unmeasured set $U$ the pilot and verification leave. $\square$

\subsection{Proof of Theorem~\ref{thm:generalization}}
\label{app:generalization-proof}
 The theorem assumes $n$ independent labels from the component's residual law and $\E|e|<\infty$. Labels acquired by attribution magnitude need not satisfy this, and the band is undefined for $t\ge\tfrac12$. Let $e_1,\ldots,e_n$ be the labels and $\widehat c$ their median.

\emph{Step 1: sample median in the central band.} If $\widehat c<q(\tfrac12-t)$, at least $n/2$ labels fall below $q(\tfrac12-t)$. Their number is binomial with success probability at most $\tfrac12-t$, so Hoeffding's inequality bounds this event by $e^{-2nt^2}$; similarly for $\widehat c>q(\tfrac12+t)$.

\emph{Step 2: excess error inside the band.} $R(c)=\E|e-c|$ is convex with derivative $2\Phi(c)-1$ almost everywhere, where $\Phi$ is the distribution function of $e$, so for $\widehat c\ge m$,
\begin{equation*}
\begin{aligned}
R(\widehat c)-R(m)&=\int_m^{\widehat c}\bigl(2\Phi(u)-1\bigr)\,du\\
&\le2t\,(\widehat c-m)\le2t\,w(t),
\end{aligned}
\end{equation*}
since $\tfrac12\le\Phi(u)\le\tfrac12+t$ on the band. The case $\widehat c<m$ is symmetric.

\emph{Step 3.} Since $R(m)=R(0)-\Gamma$, the two steps give Equation~\ref{eq:generalization}. A union bound extends the event to all $N$ components with probability at least $1-2Ne^{-2nt^2}$. \hfill$\square$

\subsection{Proof of Proposition~\ref{prop:selection}}
\label{app:selection-proof}
Let $T$ be the $B$ components with the largest $r_i$ and $S$ the $B$ with the largest scores $s_i$. The expected error left by $S$ exceeds that left by $T$ by $\sum_{i\in T}r_i-\sum_{i\in S}r_i$. Since $r_i\le s_i+\epsilon$ on $T$ and $r_i\ge s_i-\epsilon$ on $S$,
\begin{equation*}
\sum_{i\in T}r_i-\sum_{i\in S}r_i\le\sum_{i\in T}s_i-\sum_{i\in S}s_i+2B\epsilon\le2B\epsilon,
\end{equation*}
because $S$ maximizes the score sum over sets of size $B$. $\square$

The oracle knows expected remaining errors before measurement. A residual that changes sign can have a small fitted correction but large expected absolute error, so a correction alone need not provide a useful selection score.

\subsection{Dataset Averages: Sign Condition and Selected Labels}
\label{app:selection-bias}

\noindent\textbf{Lemma 2 (Sign condition).} For an averaged residual $\bar e_i$ and correction $\bar c_i$, $|\bar e_i-\bar c_i|<|\bar e_i|$ if and only if $\bar c_i\bar e_i>0$ and $|\bar c_i|<2|\bar e_i|$.

\textbf{Proof.} Squaring, the inequality is equivalent to $\bar c_i^2<2\bar c_i\bar e_i$, which requires $\bar c_i\bar e_i>0$ and then reduces to $|\bar c_i|<2|\bar e_i|$. $\square$

Dataset averages require the mean residual, whereas per-prompt absolute loss is minimized by a median. Acquisition can bias the mean. If $A_i$ indicates measurement of component $i$, the acquired-label mean targets $\E[e_i\mid A_i=1]$ rather than $\E[e_i]$. Section~\ref{sec:res:dataset} evaluates whether the acquired residuals transfer to dataset averages.

\section{Within-Prompt Evaluation and Controls}
\label{app:single-prompt-evaluation}
\subsection{Cohort, Acquisition, and Aggregation}
Within-prompt evaluation comprises 1,662 layer evaluations and 542 complete model-task-prompt groups, with a fixed $P=64$ pilot inside $B=256$ measurements. We intervene on all MLP neurons in a layer. Gemma-2-9B has 14,336 neurons (layers 10, 20, 30), Qwen3-14B has 17,408 (9, 19, 29), and Qwen3-32B has 25,600 (15, 31, 47). Each task has 100 prompt groups. Summaries include groups with all three layers; Gemma is complete, and available Qwen counts appear in Table~\ref{tab:single-prompt}.

Relative AP error divides summed absolute estimation error by summed absolute reference effects for each vector. MAE, dataset-average error, and top-ten recall retain their definitions in Section~\ref{sec:preliminaries}.

We average three fixed pilot repetitions within each layer, then layers within each prompt group, then groups. Error reduction is one minus the ratio of mean MAEs. Descriptive 95\% intervals use 20,000 paired group-bootstrap draws (seed 20261010), holding fits and measured costs fixed; across-prompt multiplicity adjustment is separate. Gemma intervals are 14.7\% to 20.1\% (IOI), 23.5\% to 28.9\% (factual), and 17.4\% to 22.5\% (SVA).

The method and baseline acquire separate AP measurements in the same allocation. Warm runtime includes attribution, acquisition, selection, fitting, and assembly, excluding loading, numerical validation, and reference collection. The baseline uses the largest executed magnitude-ranked batch prefix affordable at the method's runtime; equal-count comparisons are separate. Gemma, Qwen3-14B, and Qwen3-32B use one, two, and four L40S GPUs, respectively, with timings matched within each hardware configuration.

\subsection{Measured Cost at Specified Accuracy}
\label{app:single-prompt-tolerance}
The prespecified targets are 0.5\%, 1\%, and 2\% mean relative AP error. On the main cohorts (Table~\ref{tab:single-prompt}), each measured magnitude-ranked AP prefix retains its error and warm runtime, including attribution setup. At the exhaustive endpoint, we use the cheaper separately measured full-AP time, which omits attribution.

For each target, select the cheapest measured prefix whose group-averaged error meets it, applying the same count to all included layers and groups. Compare with fixed $P=64$, $B=256$ \guided{} only when both meet the target. Aggregation follows Appendix~\ref{app:single-prompt-evaluation}; targets constrain the cohort mean rather than every prompt.

\begin{table*}[t]
\centering

\small
\setlength{\tabcolsep}{3pt}
\begin{tabular}{llrrrccc}
\toprule
& & & \multicolumn{2}{c}{\guided{}} & \multicolumn{3}{c}{Cheapest measured verification prefix: checks / s per layer}\\
\cmidrule(lr){4-5}\cmidrule(lr){6-8}
Model & Task & Groups & Error & s/layer & 2\% target & 1\% target & 0.5\% target\\
\midrule
Gemma-2-9B & IOI & 100 & 0.94\% & 2.08 & 0 / 0.27 & 2,416 / 16.37 & 8,128 / 54.16\\
Gemma-2-9B & Factual & 100 & 1.20\% & 1.30 & 0 / 0.25 & 5,664 / 21.84 & 9,904 / 38.00\\
Gemma-2-9B & SVA & 100 & 0.71\% & 1.25 & 0 / 0.25 & 32 / 0.43 & 5,472 / 20.25\\
\midrule
Qwen3-14B & IOI & 52 & 1.34\% & 2.95 & 16 / 0.76 & 8,016 / 73.12 & 12,608 / 114.56\\
Qwen3-14B & Factual & 56 & 0.57\% & 1.67 & 0 / 0.42 & 0 / 0.42 & 2,144 / 10.29\\
Qwen3-14B & SVA & 53 & 0.58\% & 1.64 & 0 / 0.39 & 16 / 0.54 & 1,936 / 9.30\\
Qwen3-32B & IOI & 10 & 1.09\% & 6.52 & 0 / 0.87 & 6,016 / 126.55 & 15,552 / 324.93\\
Qwen3-32B & Factual & 31 & 1.09\% & 3.54 & 16 / 1.15 & 6,736 / 67.22 & 15,920 / 157.58\\
Qwen3-32B & SVA & 40 & 0.66\% & 3.55 & 0 / 0.84 & 16 / 1.16 & 6,672 / 66.88\\
\bottomrule
\end{tabular}
\caption{Warm cost at all specified targets. Zero checks denotes ordinary \atp{}.}\label{tab:single-prompt-all-targets}
\end{table*}

Table~\ref{tab:single-prompt-all-targets} reports every target and setting; a cost advantage requires both methods to meet the same target.

Fixed-estimate FP64 rescoring retains Gemma IOI error below 1\% (0.95\%); verify-more needs 2,064 checks and 14.05 seconds, giving a 6.74 time ratio with unchanged execution and timings.

\paragraph{Matching the correction's achieved error.}
This post hoc comparison matches \guided{}'s achieved mean relative AP error rather than a prespecified target. For each 32B task and reference, select the cheapest measured verification prefix attaining that error on the same complete groups, using one prefix count throughout (Table~\ref{tab:32b-observed-frontier}).

\begin{table*}[t]
\centering

\small
\setlength{\tabcolsep}{4pt}
\begin{tabular}{lrrrrrrr}
\toprule
& & \multicolumn{4}{c}{Primary reference} & \multicolumn{2}{c}{FP64 reference}\\
\cmidrule(lr){3-6}\cmidrule(lr){7-8}
Task & Groups & Error & \shortstack{\guided{}\\(s/layer)} & \shortstack{Verify-more\\(s/layer)} & Ratio & Error & Ratio\\
\midrule
IOI & 10 & 1.093\% & 6.52 & 91.16 & 14.0 & 1.110\% & 16.3\\
Factual & 31 & 1.088\% & 3.54 & 51.95 & 14.7 & 1.099\% & 11.8\\
SVA & 40 & 0.664\% & 3.55 & 21.06 & 5.9 & 0.658\% & 2.6\\
\bottomrule
\end{tabular}
\caption{Post hoc 32B accuracy and warm runtime at \guided{}'s achieved error. Ratios divide verify-more time by \guided{} time. FP64 rescoring keeps the estimates and measured runtimes fixed.}\label{tab:32b-observed-frontier}\label{tab:32b-matched-accuracy}
\end{table*}

Each paired group-bootstrap draw recomputes the method's mean error and the cheapest matching prefix (20,000 draws, seed 20261010). Descriptive time-ratio intervals are 7.9 to 19.5, 10.9 to 18.2, and 4.2 to 8.1 for IOI, factual completion, and SVA; fixed-estimate FP64 intervals are 9.1 to 21.3, 5.4 to 17.8, and 1.5 to 4.7. Cohorts match Table~\ref{tab:single-prompt}.

Relative error normalizes each vector before group averaging, so its prompt weighting differs from unnormalized MAE. Matching achieved relative error is distinct from matching runtime or a prespecified accuracy target.

\subsection{Measured Comparisons with Other Correction Methods}
\label{app:single-prompt-comparators}
The comparison with other correction methods uses 30 complete prompt groups per Gemma task, each scored on all three layers, totaling 270 layer comparisons. These groups overlap the main 100-group sets (Table~\ref{tab:single-prompt-comparators}).

All selective methods use 256 components or AP measurements; \guided{} includes its 64-label pilot. \atpstar{} verification includes gradient-drop ranking. HVP and MS-HVP correct magnitude-selected components; MS-HVP uses four or five steps. Singleton IG uses four or eight Gauss-Legendre points on isolated component paths. Unselected components retain \atp{}. No-pilot centering shifts them by the negative median attribution. Runtimes include ranking, operators, and assembly on the same hardware; reference collection is separate.

\begin{table*}[t]
\centering

\small
\setlength{\tabcolsep}{4pt}
\begin{tabular}{lrrrrrr}
\toprule
& \multicolumn{2}{c}{IOI} & \multicolumn{2}{c}{Factual} & \multicolumn{2}{c}{SVA}\\
\cmidrule(lr){2-3}\cmidrule(lr){4-5}\cmidrule(lr){6-7}
Method & Error & Seconds & Error & Seconds & Error & Seconds\\
\midrule
AtP & 1.474\% & 0.29 & 1.828\% & 0.28 & 1.404\% & 0.29\\
Verify-more & 1.292\% & 2.61 & 1.725\% & 1.29 & 0.909\% & 1.67\\
\atp{}* + AP & 1.286\% & 3.46 & 1.721\% & 2.69 & 0.898\% & 2.62\\
No-pilot centering & 1.320\% & 2.61 & 1.722\% & 1.29 & 0.942\% & 1.67\\
Selective HVP & 1.321\% & 18.69 & 1.758\% & 18.70 & 1.029\% & 18.76\\
MS-HVP (4) & 1.314\% & 131.52 & 1.757\% & 131.57 & 0.930\% & 132.01\\
MS-HVP (5) & 1.313\% & 169.23 & 1.757\% & 169.15 & 0.928\% & 169.68\\
Singleton IG (4) & 1.313\% & 73.76 & 1.757\% & 73.31 & 0.924\% & 73.33\\
Singleton IG (8) & 1.313\% & 147.10 & 1.757\% & 146.56 & 0.924\% & 146.10\\
\guided{} & 0.952\% & 2.69 & 1.230\% & 1.30 & 0.723\% & 1.64\\
Exhaustive AP & 0.000\% & 125.24 & 0.000\% & 53.43 & 0.000\% & 69.23\\
\bottomrule
\end{tabular}
\caption{Measured comparator accuracy and warm seconds per layer on the complete Gemma comparator groups. Error is relative AP error. Parentheses identify MS-HVP steps or IG quadrature points.}\label{tab:single-prompt-comparators}
\end{table*}

Against runtime-matched verify-more, \guided{} lowers MAE by 16.0\%, 26.3\%, and 19.7\% on IOI, factual completion, and SVA in this comparator cohort. Table~\ref{tab:single-prompt-comparators} gives all methods' measured errors and runtimes.

Figure~\ref{fig:baseline-comparison} uses fixed budget-frontier cohorts: nine IOI, eight factual, and nine SVA Gemma groups, plus 45 SVA groups for 32B. Gemma $B=512$ points reuse the first 512 measurements from $B=1024$ sequences with the same pilot; time combines the measured acquisition time of those prefixes with the $B=256$ setup, fit, and assembly overhead. Derivative budgets are 64 and 256. Hollow Gemma markers denote five-step MS-HVP and eight-point IG; 32B uses endpoint HVP, four-step MS-HVP, and four-point IG.

\subsection{Within-Prompt Controls and Ablations}
\label{app:cpu-controls}
Verification-plus-shift fits a median residual from the same 256 magnitude-ranked AP measurements as verification alone, then shifts unmeasured estimates. \guided{} uses a 64-label representative pilot inside that budget. Mean-versus-median fits share labels. Uniform and stratified sampling are replayed from measured AP tables, counting all selected measurements. These controls use the main cohorts; the acquired pilot-size study uses a separate cohort.

\begin{figure*}[t]
\centering
\includegraphics[width=\textwidth]{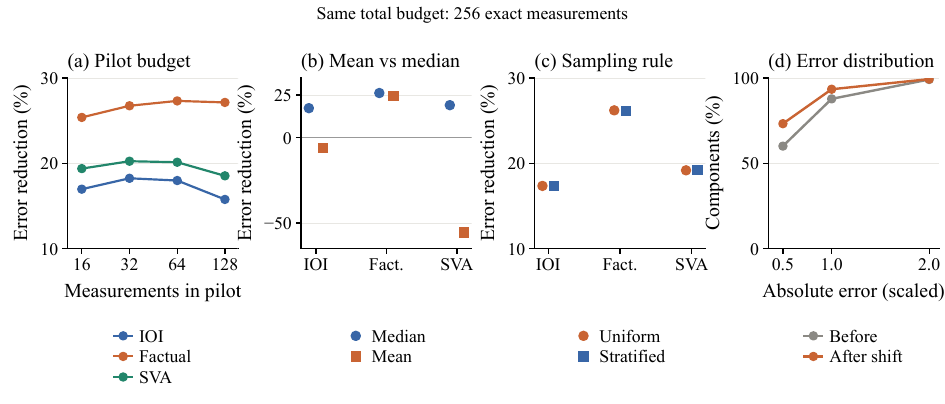}
\caption{Within-prompt \guided{} ablations on Gemma-2-9B at 256 measurements. Panels (a) to (c) report MAE reductions relative to verification alone. Panel (d) shows factual absolute-error distributions before and after the common shift, scaled by the original root mean square error.}
\label{fig:pilot-ablations-full}
\end{figure*}

\begin{table*}[t]
\centering

\footnotesize
\setlength{\tabcolsep}{4pt}
\begingroup
\begin{tabular}{llrrrr}
\toprule
Model & Task & Groups & \shortstack{\guided{} vs\\verification (\%)} & \shortstack{\guided{} vs\\verification + shift (\%)} & \shortstack{FP64: \guided{} vs\\verification + shift (\%)}\\
\midrule
Gemma-9B & IOI & 100 & 17.3 [14.7, 20.0] & 57.2 [53.9, 60.1] & 56.4 [53.0, 59.6]\\
Gemma-9B & Factual & 100 & 26.2 [23.4, 28.9] & 23.6 [17.4, 30.3] & 22.9 [16.4, 30.0]\\
Gemma-9B & SVA & 100 & 19.1 [16.7, 21.5] & 55.1 [48.0, 61.8] & 55.1 [47.9, 61.8]\\
Qwen3-14B & IOI & 52 & 24.0 [19.2, 28.5] & 45.8 [40.7, 50.3] & 45.4 [40.1, 50.1]\\
Qwen3-14B & Factual & 56 & 12.2 [9.6, 15.0] & 75.9 [71.0, 79.3] & 75.7 [70.7, 79.2]\\
Qwen3-14B & SVA & 53 & 10.9 [7.9, 14.1] & 75.3 [70.9, 78.8] & 75.2 [70.8, 78.6]\\
Qwen3-32B & IOI & 10 & 16.8 [10.8, 22.3] & 71.6 [60.2, 78.1] & 71.1 [59.6, 77.6]\\
Qwen3-32B & Factual & 31 & 17.5 [13.7, 21.3] & 70.1 [62.2, 75.5] & 70.1 [62.1, 75.4]\\
Qwen3-32B & SVA & 40 & 13.7 [10.6, 16.6] & 78.0 [73.4, 81.2] & 78.4 [73.9, 81.5]\\
\bottomrule
\end{tabular}
\endgroup
\caption{Within-prompt \guided{} MAE reductions at equal measurement counts. Intervals are descriptive paired-group 95\% intervals. FP64 scoring keeps the deployed FP32 estimates fixed.}\label{tab:pilot-controls-full}
\end{table*}

\begin{table}[t]
\centering

\footnotesize
\setlength{\tabcolsep}{3pt}
\begingroup
\begin{tabular}{lrrrr}
\toprule
Task & Pilot & \shortstack{Stratified\\median} & \shortstack{Uniform\\median} & \shortstack{Stratified\\mean}\\
\midrule
IOI & 16 & 16.7 & 16.5 & 7.0\\
IOI & 32 & 17.6 & 17.5 & 8.2\\
IOI & 64 & 17.3 & 17.3 & 5.0\\
IOI & 128 & 15.3 & 15.3 & 5.9\\
Factual & 16 & 24.2 & 24.1 & 23.9\\
Factual & 32 & 25.6 & 25.7 & 25.3\\
Factual & 64 & 26.2 & 26.2 & 25.6\\
Factual & 128 & 26.1 & 26.1 & 25.7\\
SVA & 16 & 18.1 & 17.7 & 1.8\\
SVA & 32 & 19.2 & 19.1 & 9.8\\
SVA & 64 & 19.2 & 19.2 & 10.4\\
SVA & 128 & 17.4 & 17.4 & 9.7\\
\bottomrule
\end{tabular}
\endgroup
\caption{Gemma \guided{} pilot ablations with replayed samples. Entries give percent lower MAE than verification alone at 256 total measurements.}\label{tab:pilot-design-full}
\end{table}

Figure~\ref{fig:pilot-ablations-full}a and Table~\ref{tab:pilot-size-acquired} use the acquired pilot-size cohort below. Panel (b) uses the measured pilot labels from 100 Gemma groups per task; panel (c) and Table~\ref{tab:pilot-design-full} replay samples on those groups. All use the layer/group aggregation above and descriptive uncertainty.

The acquired pilot-size study uses 20 groups per task, all three layers, four pilot sizes, and three sampling seeds. The same complete groups enter every size. Pilots of 32 or 64 measurements capture most of the benefit (Table~\ref{tab:pilot-size-acquired}).

\begin{table}[H]
\centering

\footnotesize
\setlength{\tabcolsep}{3pt}
\begin{tabular}{lrrrrr}
\toprule
Task & Groups & $P=16$ & $P=32$ & $P=64$ & $P=128$\\
\midrule
IOI & 20 & 17.0 & 18.2 & 18.0 & 15.8\\
Factual & 20 & 25.4 & 26.7 & 27.3 & 27.1\\
SVA & 20 & 19.4 & 20.2 & 20.1 & 18.5\\
\bottomrule
\end{tabular}
\caption{Acquired pilot-size ablation on Gemma-2-9B. Entries are MAE reductions relative to verification alone, in percent, at 256 total measurements. The same complete groups enter each pilot size.}\label{tab:pilot-size-acquired}
\end{table}

At $P=64$, 10,000 paired group-bootstrap draws give descriptive 95\% intervals of 11.4\% to 24.6\% (IOI), 22.2\% to 32.1\% (factual), and 14.5\% to 25.9\% (SVA). Group wins are 19/20, 20/20, and 19/20; the main results use the separate 100-group cohorts.

Figure~\ref{fig:pilot-ablations-full}d pools factual residuals on their original RMS scale. Fractions within 0.5, 1, and 2 RMS use exact histogram bin boundaries. The shifted distribution applies the pilot offset before exact replacements. At least three quarters of residuals share a strict sign in 43\%, 51\%, and 47\% of Gemma IOI, factual, and SVA prompt-layer cases, respectively, measured from their quartiles.

\subsection{Mechanism and Precision Checks}
\label{app:mechanism-precision}

After predictions and selected sets are fixed, reference residuals supply the diagnostic best common shift on the unmeasured set. Its realized correction benefit is $H_U-E_U$. $C_{\rm alloc}$ compares the method with the equal-count magnitude prefix $V_B$; $C_{\rm count}$ accounts for the runtime-matched prefix $V_{B_0}$. Table~\ref{tab:pilot-decomposition} normalizes terms by mean matched-baseline MAE.

\begin{table}[t]
\centering

\small
\setlength{\tabcolsep}{4pt}
\begin{tabular}{lrrrrr}
\toprule
Task & $H_U$ & $E_U$ & $C_{\rm alloc}$ & $C_{\rm count}$ & Gain\\
\midrule
IOI & 19.76 & 0.72 & 1.74 & $-$0.13 & 17.42\\
Factual & 27.37 & 0.83 & 0.36 & $-$0.02 & 26.21\\
SVA & 21.07 & 0.82 & 1.36 & $-$1.04 & 19.92\\
\bottomrule
\end{tabular}
\caption{Theorem~\ref{thm:within}'s decomposition on Gemma, in percentage points of matched-baseline MAE. Negative count cost means fewer baseline checks.}\label{tab:pilot-decomposition}
\end{table}

Primary acquisition discrepancy is $D_{\rm acq}=0$. The fitted median retains 96.4\%, 97.0\%, and 96.1\% of the best shift's benefit on IOI, factual completion, and SVA. On SVA, runtime matching contributes 1.04 percentage points and displaced verification costs 1.36; the equal-count gain is 19.1\%. Group-averaged wins are 96/100, 97/100, and 98/100.

FP64 readout checks retain the FP32 model body, hidden states, attribution vector, and runtime match. \emph{Fixed} scores deployed FP32 estimates against the FP64 reference; \emph{refit} also uses FP64 acquired values for the pilot shift and measured coordinates. Neither runs a full-FP64 model (Table~\ref{tab:single-prompt-precision}).

\begin{table}[t]
\centering

\small
\setlength{\tabcolsep}{3.5pt}
\begin{tabular}{llrrr}
\toprule
Model & Task & Primary & Fixed & Refit\\
\midrule
Gemma-9B & IOI & 17.4 & 10.1 & 16.3\\
 & Factual & 26.2 & 23.4 & 25.4\\
 & SVA & 19.9 & 15.4 & 20.4\\
Qwen3-14B & IOI & 24.0 & 21.0 & 24.5\\
 & Factual & 12.3 & 10.4 & 12.8\\
 & SVA & 10.9 & 4.1 & 9.5\\
Qwen3-32B & IOI & 17.3 & 13.6 & 16.9\\
 & Factual & 17.5 & 8.2 & 14.2\\
 & SVA & 13.7 & 8.0 & 14.2\\
\bottomrule
\end{tabular}
\caption{Percent lower MAE under the primary reference and two FP64 sensitivity checks. Fixed estimates and FP64-acquired refits keep the budget match.}\label{tab:single-prompt-precision}
\end{table}

Fixed-estimate 95\% intervals are 7.2\% to 13.1\%, 20.5\% to 26.3\%, and 12.6\% to 18.3\% on Gemma IOI, factual, and SVA. Qwen3-14B SVA is 4.1\% (0.5\% to 8.1\%). For 32B, fixed-estimate intervals are 8.1\% to 17.7\%, 3.2\% to 13.1\%, and 4.4\% to 11.8\%; primary intervals are 11.6\% to 22.7\%, 13.6\% to 21.2\%, and 10.7\% to 16.7\%. Primary group wins are 9/10, 30/31, and 39/40, on the same cohorts as Table~\ref{tab:single-prompt}.

\subsection{Fixed Budget across Component Populations}
\label{app:component-scaling}
Nested populations contain 1,024, 4,096, or all neurons in a predetermined random order. Each receives $P=64$, $B=256$ on the same groups, layers, and seeds as the full-population study, with its own runtime-matched verification prefix. Population sizes are dependent views of each prompt group (Table~\ref{tab:component-scaling}).

\begin{table}[t]
\centering

\small
\setlength{\tabcolsep}{3pt}
\begin{tabular}{llrrr}
\toprule
Model & Task & $N=1,024$ & $N=4,096$ & Full\\
\midrule
Gemma-9B & IOI & 22.7\% & 19.4\% & 17.4\%\\
 & Factual & 27.8\% & 26.8\% & 26.2\%\\
 & SVA & 22.6\% & 20.9\% & 19.9\%\\
Qwen3-14B & IOI & 27.4\% & 25.7\% & 24.0\%\\
 & Factual & 16.2\% & 13.9\% & 12.3\%\\
 & SVA & 19.2\% & 14.4\% & 10.9\%\\
Qwen3-32B & IOI & 23.7\% & 20.1\% & 17.3\%\\
 & Factual & 22.4\% & 19.6\% & 17.5\%\\
 & SVA & 19.6\% & 16.7\% & 13.7\%\\
\bottomrule
\end{tabular}
\caption{MAE reduction against runtime-matched verify-more at three predetermined component populations. Full populations are 14,336, 17,408, and 25,600 neurons for the three models.}\label{tab:component-scaling}
\end{table}

\section{Error-Structure Diagnostics}
\label{app:stability}
\paragraph{Probe noise grows with interactions.}
Define $\kappa_i=\sqrt{\sum_{j\ne i}G_{ij}^2}/\max(|G_{ii}|,10^{-30})$, the ratio of off-diagonal to diagonal curvature that sets the sketch noise in Equation~\ref{eq:variance}. Its median rises from 5.8 to 77.2 over $N=32$ to $1{,}152$ heads in Qwen3-8B IOI (Figure~\ref{fig:mechanism}). Separately, 41 $N/R$ settings on ten held-out IOI prompts give a median observed-to-predicted sketch-MSE ratio of 1.003 (range 0.945 to 1.042). Each point averages row-level MSE ratios, and settings are dependent.

\begin{figure}[t]
\centering
\includegraphics[width=7.7cm]{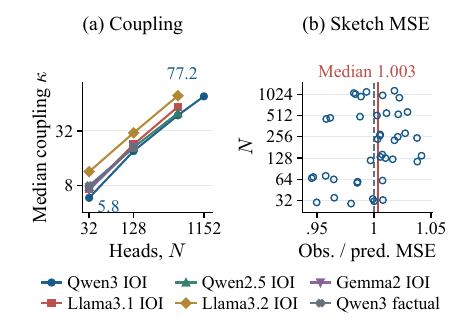}
\caption{Sketch-noise diagnostics. Left: median coupling across six model-task settings. Right: 41 $N/R$ means on Qwen3-8B IOI; dashed line denotes equality and red line the median. Points are dependent, not confidence intervals.}
\label{fig:mechanism}
\end{figure}

Table~\ref{tab:stability} compares fixed calibration halves. AP error correlations use heads measured in both halves; probe correlations use all heads. Top-16 overlap scores the model fitted on all 50 calibration groups separately on each half. These describe calibration stability rather than held-out allocation performance.

\begin{table}[t]
\centering

\small
\setlength{\tabcolsep}{4pt}
\begin{tabular}{llrrr}
\toprule
Model & Task & AP error $\rho$ & Probe $\rho$ & Top-16 kept\\
\midrule
Qwen3 & IOI & 0.69 & 0.97 & 16 / 16\\
Qwen3 & Factual & 0.80 & 0.97 & 15 / 16\\
Qwen3 & SVA & 0.57 & 0.95 & 14 / 16\\
Qwen2.5 & IOI & 0.69 & 0.95 & 13 / 14\\
Qwen2.5 & Factual & 0.81 & 0.97 & 15 / 15\\
Qwen2.5 & SVA & 0.62 & 0.95 & 12 / 15\\
\bottomrule
\end{tabular}
\caption{Stability between fixed calibration halves. AP error $\rho$ uses heads measured in both halves (87 to 137 per setting); probe $\rho$ uses all heads. Top-16 kept reports overlap for the \guided{} / \probecal{} model fitted on all 50 calibration groups, scored on each half.}\label{tab:stability}
\end{table}

For the dataset-average effects, the absolute mean signed error across heads is 81\% to 96\% of the mean absolute error. Equivalently, with $\bar g_i$ and $\bar\Delta_i$ denoting the estimated and reference mean effects, the diagnostic is $|\sum_i(\bar g_i-\bar\Delta_i)|/\sum_i|\bar g_i-\bar\Delta_i|$. It describes agreement in error sign across heads after averaging prompts.

\section{Confirmation Protocol}
\label{app:protocol}
Across-prompt evaluation uses 50 calibration and 100 evaluation groups in each of six model-task cells. All 1,152 Qwen3-8B heads (revision \texttt{b968826d}) and 784 Qwen2.5-7B-Instruct heads (\texttt{a09a3545}) are included \citep{yang2025qwen3,qwen2025qwen25}. Collection uses H100, PyTorch~2.9.1, Transformers~4.57.1, CUDA~12.6, FP32 eager attention, and disabled TF32. Groups shared across models remain linked. IOI equally mixes matched and template/entity-shifted groups; exact and reversed contrasts do not overlap across splits.

Arms comprise \atp{} ranking (verify-more), random ranking, \guided{}, a cost-matched guided prefix, \probecal{}, and a capacity-matched AP-only learner. The reference-error oracle is a diagnostic ceiling. The five nonbaseline arms, six cells, four budgets, and three horizons define 360 comparisons. \atpstar{} is evaluated within prompts (Appendix~\ref{app:single-prompt-comparators}); all across-prompt arms use \atp{} estimates.

\guided{} acquires 1,600 labels per cell by magnitude-guided round robin. \probecal{} uses eight Rademacher probes per calibration group, totaling 400 HVPs. Fits and the capacity-matched control follow Appendices~\ref{app:acquisition} and~\ref{app:probecal}.

The output metric is the final-position logit difference between correct and counterfactual tokens, relative to the clean baseline. Per-prompt MAE is computed after $B\in\{16,32,64,128\}$ verifications. Dataset-mean error averages effects over the 100 held-out groups before taking absolute error; it is not average per-prompt MAE. Recall uses the ten largest absolute reference effects.

Primary verification and scoring share a measured full AP table; separately acquired AP checks at $B=16$ and $B=32$, with their cost included, confirm this shared table. Total costs include calibration, fitting, transfer, model loading, attribution, and verification. Analysis lengths of 100, 1,000, and 10,000 prompts project measured costs and observed errors; they are not additional evaluation samples.

We use 20,000 paired group-bootstrap draws with the prescribed groups and strata. Two-sided Bonferroni percentile intervals use tail probability $0.05/(2\cdot360)$, about 1.39 draws per tail, and condition on the fitted calibration and measured costs.

\subsection{Across-Prompt Result Summaries}
\label{app:across-result-summaries}
Table~\ref{tab:allocation} reports allocation at matched projected total cost, including calibration. Table~\ref{tab:dataset} gives dataset-average error ratios against uncorrected \atp{} and calibration coverage.

\begin{table}[t]
\centering

\scriptsize
\setlength{\tabcolsep}{1.8pt}
\begin{tabular}{llcccr}
\toprule
& & \multicolumn{2}{c}{Error ratio} & \multicolumn{2}{c}{\guided{} labels}\\
\cmidrule(lr){3-4}\cmidrule(lr){5-6}
Model & Task & \guided{} & \probecal{} & Heads & Coverage\\
\midrule
Qwen3 & IOI & 0.95 & \textbf{1.49} & 121/1,152 & 10.5\%\\
Qwen3 & Factual & \textbf{3.98} & \textbf{1.87} & 145/1,152 & 12.6\%\\
Qwen3 & SVA & 1.23 & 1.14 & 265/1,152 & 23.0\%\\
Qwen2.5 & IOI & \textit{0.66} & \textbf{1.09} & 163/784 & 20.8\%\\
Qwen2.5 & Factual & \textbf{4.78} & \textbf{1.86} & 141/784 & 18.0\%\\
Qwen2.5 & SVA & 1.34 & 0.99 & 217/784 & 27.7\%\\
\bottomrule
\end{tabular}
\caption{Dataset-average correction. Error ratio is uncorrected divided by corrected MAE. Bold and italic mark adjusted significant improvements and harms. Heads and coverage refer to \guided{} calibration.}\label{tab:dataset}
\end{table}

\section{Cost Frontiers for All Cells}
\label{app:frontiers}
Figures~\ref{fig:frontier}, \ref{fig:frontiers-a}, and~\ref{fig:frontiers-b} cover all six across-prompt cells, budgets, and projected analysis lengths.

\begin{figure*}[t]
\centering
\includegraphics[width=\textwidth]{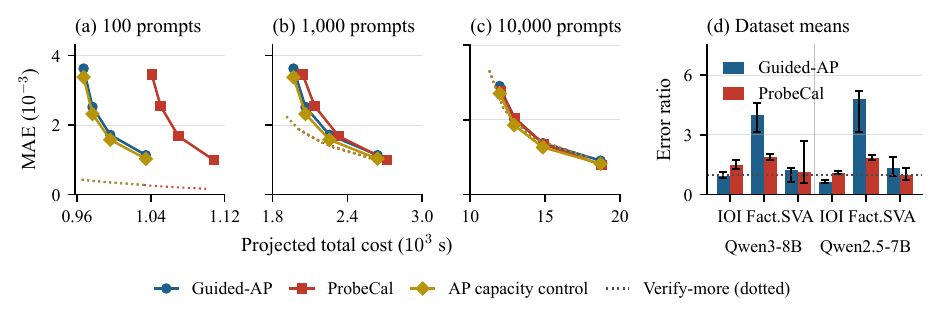}
\caption{Across-prompt evaluation on Qwen3 IOI. Error against projected total cost at three analysis lengths, with markers for verification budgets and dotted curves for verify-more, followed by the dataset-mean correction ratios for all six cells with adjusted intervals.}
\label{fig:frontier}
\end{figure*}

Calibration cost favors verify-more at 100 prompts throughout the grid. At 1,000 prompts, 108 of 120 adjusted comparisons favor verify-more and none favors learning. At 10,000 prompts, the useful cells and budgets depend on the signal. \guided{} improves Qwen3 IOI and both SVA cells at $B=16$; \probecal{} also improves factual allocation at $B\ge64$. The figures retain the remaining comparisons and controls.

Lower MAE need not improve recall. One Qwen3 IOI \probecal{} setting reduces MAE by 11\% while top-ten recall falls from 99.2\% to 97.0\%.

Projected break-even ranges from 1,900 to 24,000 prompts at $B=16$ and 900 to 15,500 at $B=128$. Recall intervals usually include no change; some IOI $B=16$ settings reduce recall by 0.03 to 0.07.

\begin{figure*}[tp]
\centering
\includegraphics[width=\textwidth]{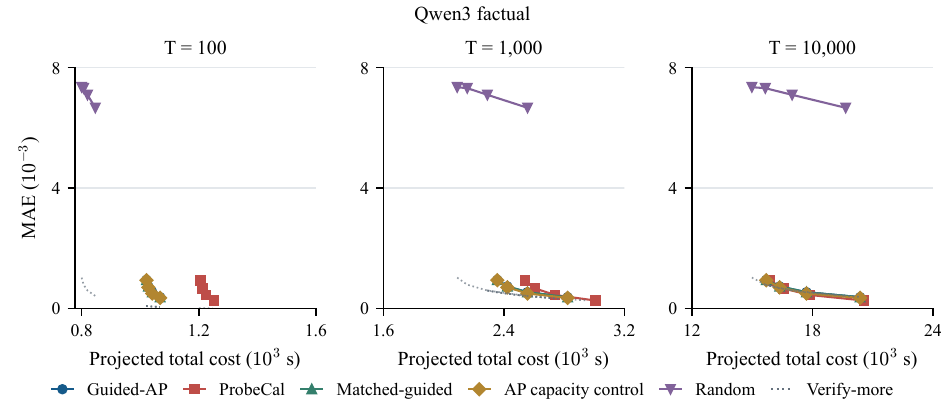}\\[2pt]
\includegraphics[width=\textwidth]{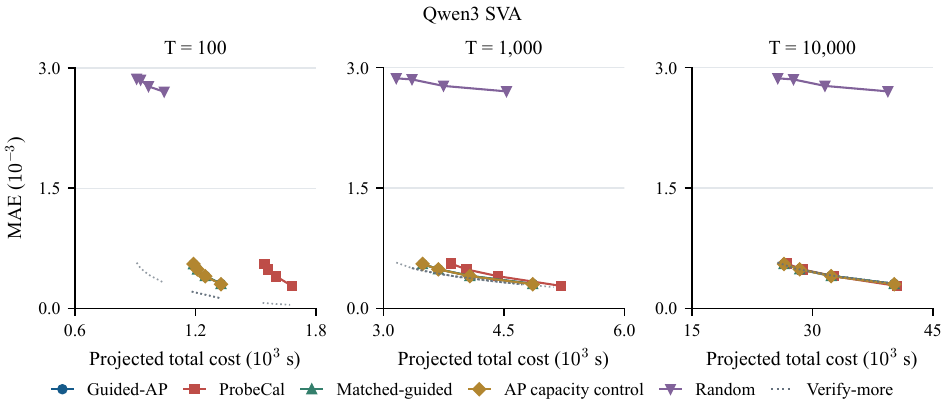}\\[2pt]
\includegraphics[width=\textwidth]{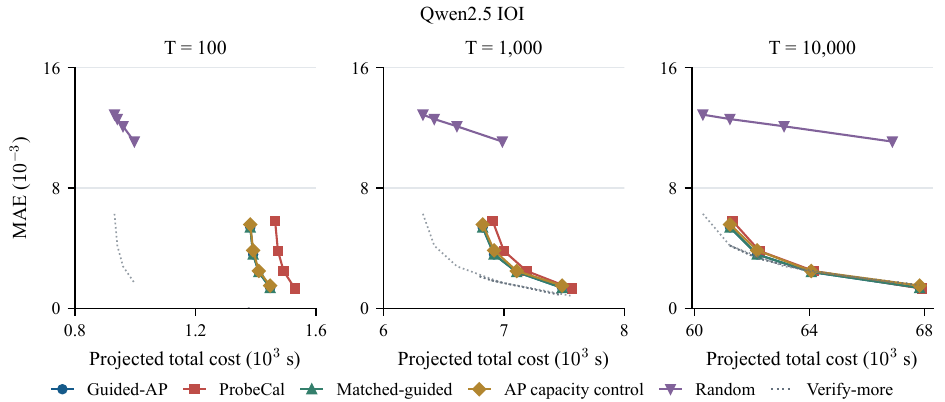}
\caption{Cost-error frontiers for three more cells at 100, 1,000, and 10,000 prompts. Conventions follow Figure~\ref{fig:frontier}.}
\label{fig:frontiers-a}
\end{figure*}

\begin{figure*}[!t]
\centering
\includegraphics[width=\textwidth]{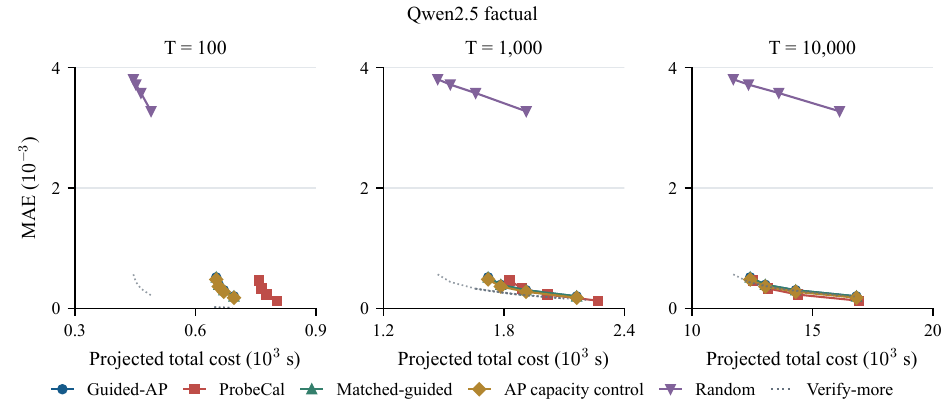}\\[2pt]
\includegraphics[width=\textwidth]{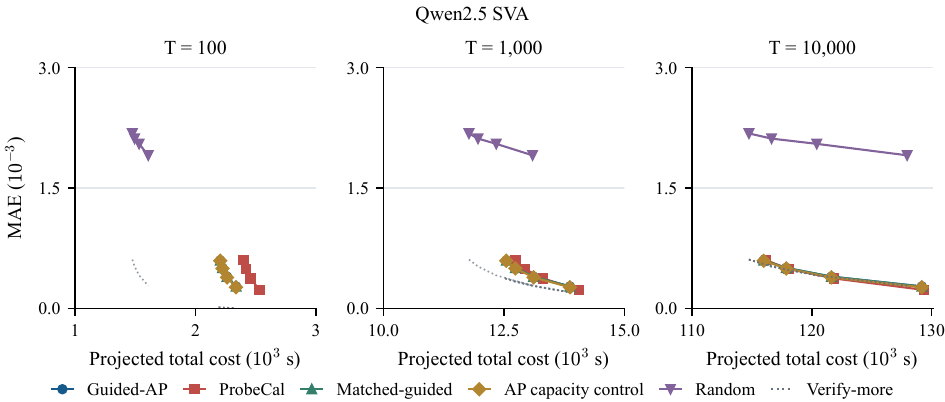}
\caption{Cost-error frontiers for the remaining two cells. Conventions as in Figure~\ref{fig:frontier}.}
\label{fig:frontiers-b}
\end{figure*}

\section{Additional Controls on the Confirmation Cells}
\label{app:controls}
These post hoc controls use the same six cells and 20,000 paired bootstrap draws, with their analysis protocol fixed before computation.

\paragraph{Dataset correction at matched total cost.}
Verify-more uses the largest affordable magnitude-ranked AP prefix. It wins in all settings at 100 prompts and in ten of twelve at 1,000 prompts. At 10,000 prompts it affords no additional checks, so the comparison is with uncorrected \atp{}. Where correction improves that baseline, projected break-even is 527 to 2,744 prompts. The IOI exceptions appear in Table~\ref{tab:dataset}.

\paragraph{Learning residuals versus averaging measurements.}
Using the same 1,600 queries directly as component-average effects produces 5.8 to 14.9 times more error than residual correction (Equation~\ref{eq:dataset-mean-correction}) in all six cells, with every adjusted interval above one, and is worse than uncorrected \atp{}.

\paragraph{Independently acquired verification.}
Separate AP forwards at $B=16$ and $B=32$ reproduce the primary values exactly in all six cells and six arms; all 75 testable prespecified comparisons retain their direction.

\section{Cost Accounting and Numerical Scope}
\label{app:cost}
For method $m$ under a fixed loading and batching schedule, projected total cost is
\begin{equation}
C_m(T,B)=C_m^{\rm init}+T c_m(B),
\label{eq:totalcost}
\end{equation}
where $C_m^{\rm init}$ includes calibration, fitting, and the specified initialization costs. The per-prompt term $c_m(B)$ includes attribution, ranking, activation handling, transfer, and verification. $T$ is a projected analysis length, not the number of evaluation prompts over which dataset averages are measured. The prespecified analysis lengths are 100, 1,000, and 10,000 prompts, and the allocation budgets are $B\in\{16,32,64,128\}$.

For an across-prompt policy with $B$ verifications, the matched verify-more budget is
\begin{equation*}
\begin{aligned}
b_m(T,B)&=\max_{b\in\{0,\ldots,N\}}b\\
&\text{subject to } C_0(T,b)\leq C_m(T,B),
\end{aligned}
\end{equation*}
Here $C_0$ includes attribution and magnitude-ranked verification. Measured batch padding and overhead determine the integer budget; unused compute is retained. Within-prompt matching uses measured runtime without prior calibration. Dense curves are descriptive; adjusted inference uses the prespecified budgets and horizons.

Equation~\ref{eq:totalcost} assumes one initialization. Break-even horizons project measured cost components.

On Qwen3 factual completion, 1,600 AP labels at batch size 16 took 13.98 seconds (0.009 seconds/check); 400 serial HVPs took 51.77 seconds (0.13 seconds/HVP). At these rates, one HVP costs about 15 exact checks. Full calibration costs, including probes, fitting, transfer, and model loading, range from 384 to 1,050 seconds per cell and are included in the comparisons.

Deployment includes initialization, calibration, fitting, model loading, gradients, activation handling, transfer, and verification; reference collection is separate. Rates are specific to the measured hardware configuration.

Interventions add directions to live activations, preserving upstream propagation. Derivatives and references use FP32 eager execution; same-hidden-state FP64 readouts assess numerical sensitivity.

\end{document}